\newtheorem{theorem}{Theorem}
\newtheorem{proposition}[theorem]{Proposition}
\newtheorem{remark}[theorem]{Remark}
\definecolor{darkgreen}{rgb}{0,0.5,0}
\title{Why DPO is a Misspecified Estimator and \\How to Fix It 
}
\author{Aditya Gopalan\\
IISc Bangalore\\
\texttt{aditya@iisc.ac.in} \\
\And
Sayak Ray Chowdhury \\
IIT Kanpur\\
\texttt{sayakrc@iitk.ac.in} \\
\And
Debangshu Banerjee \\
HP AI Research\\
\texttt{debangshu.banerjee@hp.com}
}
\begin{document}

\maketitle

\begin{abstract}
Direct alignment algorithms such as Direct Preference Optimization (DPO) fine-tune models based on preference data, using only supervised learning instead of two-stage reinforcement learning with human feedback (RLHF). We show that DPO encodes a statistical estimation problem over reward functions induced by a parametric policy class. When the true reward function that generates preferences cannot be realized via the policy class, DPO becomes misspecified, resulting in failure modes such as preference order reversal, worsening of policy reward, and high sensitivity to the input preference data distribution. On the other hand, we study the local behavior of two-stage RLHF for a parametric class and relate it to a natural gradient step in policy space.  Our fine-grained geometric characterization allows us to propose AuxDPO, which introduces additional auxiliary variables in the DPO loss function to help move towards the RLHF solution in a principled manner and mitigate the misspecification in DPO. We empirically demonstrate the superior performance of AuxDPO on didactic bandit settings as well as LLM alignment tasks.

\end{abstract}

\section{Introduction}
\label{sec:intro}

Preference-based alignment is a key part of the training process of large language models (LLMs). It aims to steer a pretrained model’s conditional distribution toward outputs that humans (or carefully calibrated annotator models) prefer. Formally, given comparison data $(s, a_w, a_l)$, the goal is to shape a policy $\pi$ whose induced responses align with a latent reward model that generated those preferences.

Two-stage RLHF is the standard way of carrying out preference-based alignment \citep{ziegler2019fine}. However, it is computationally demanding (it requires training a separate reward model) and complex due to a two-stage pipeline (supervised learning for the reward model followed by RL policy optimization based on the learned reward model). Concretely, the reward model $r_\phi(s,a)$ is trained on preference pairs via a Bradley–Terry/Logistic objective~\citep{bradley1952rank}, maximizing $\log \sigma\!\big(r_\phi(s,a_w) - r_\phi(s,a_l)\big)$ over $(s,a_w,a_l)$. The second stage then optimizes a KL-regularized objective of the form $\max_{\pi}\; \mathbb{E}_{s \sim \rho,\, a \sim \pi(\cdot\mid s)}\!\big[r_\phi(s,a)\big]\;-\;\beta\,D_\KL\!\big(\pi(\cdot\mid s)\,\|\,\pi_{\mathrm{ref}}(\cdot\mid s)\big)$,
typically implemented with PPO-style updates. This stage is on-policy and rollout-heavy: the model must repeatedly generate samples to estimate advantages under $r_\phi$, maintain a stable KL to the reference policy $\pi_{\mathrm{ref}}$ (often the SFT model), and tune sensitive hyperparameters (e.g., $\beta$, clip ranges, learning rates). In practice, this entails nontrivial engineering (reward hacking mitigation, variance reduction, response-length control) and significant compute for both reward-model training and RL updates, which motivates interest in lighter-weight alternatives.

The introduction of direct alignment algorithms, such as Direct Preference Optimization (DPO) \citep{rafailov2023direct}, was a landmark step that paved the way for lightweight alignment of a base model using preference data and a single supervised training phase. DPO operates by explicitly solving the second, KL-regularized, policy optimization phase of RLHF and using it to reparameterize the first phase of reward learning in terms of the optimized policy, in effect achieving a one-step equivalent to the original two-step pipeline. This has been instrumental in enabling both industrial players and the open-source AI community to carry out fast alignment of models without the burden of additional resources. Many variants of DPO have since been developed, catering to various aspects of direct alignment.

Despite its widespread appeal, however, the design of DPO rests on the idealized assumption that the policy class is {\em tabular}, i.e., it includes every possible input-output conditional probability distribution $(\pi(a\mid s))_{s,a}$, where $s$ and $a$ denote prompt and response strings, respectively. This assumption enables the KL-regularized policy optimization problem to be solved in closed form and used explicitly to derive the equivalent supervised DPO loss \cite[Appendix A.1]{rafailov2023direct}.

In contrast, real-world LLMs are far from tabular and are, in fact, parametric policy classes that naturally result from the use of neural architectures (e.g., Transformers) with a finite number of parameters. One may then ask: Does minimizing the DPO loss over a non-tabular policy class still preserve the claimed equivalence with full two-stage RLHF? If not, then how does it differ from the ideal RLHF-optimal policy? Does it enjoy any guarantees with respect to the performance of the latter, and, if not, is there a principled fix? 

\begin{wrapfigure}{r}{0.65\textwidth}
\vspace{-10pt}

    \resizebox{\linewidth}{!}{%
\begin{tikzpicture}[scale=1.5]
  \tikzset{
    blueline/.style={thick, blue!70!blue, opacity=0.9},
    redline/.style={thick, red!70!red, opacity=0.9},
    brownline/.style={thick, brown!70!brown, opacity=0.9},
    yellowhighlight/.style={line width=3mm, yellow!80!orange, opacity=0.4},
    orangehighlight/.style={line width=3mm, orange!80!orange, opacity=0.4},
    greenpoint/.style={fill=green!70!black, circle, inner sep=1.5pt},
    bluepoint/.style={fill=blue!70!cyan, circle, inner sep=1.5pt, text=blue!70!cyan},
    blackpoint/.style={fill=black!70!black, circle, inner sep=1.5pt, text=blue!70!cyan},
    yellowpoint/.style={fill=yellow!80!orange, circle, inner sep=1.5pt},
    redpoint/.style={fill=red!80!red, circle, inner sep=1.5pt},
    orangepoint/.style={fill=orange!80!red, circle, inner sep=1.5pt},
    bluestar/.style={star, star points=5, star point ratio=2.5, fill=blue!70!cyan, inner sep=0.5pt, minimum size=10pt},
    yellowstar/.style={star, star points=5, star point ratio=2.5, fill=yellow!80!orange, inner sep=0.5pt, minimum size=10pt},
    graytext/.style={text=gray!70!black},
  }
  
  \coordinate (SW) at (-5, -3);
  \coordinate (SE) at (5, -3);
  \coordinate (NW) at (-5, 3);
  \coordinate (NE) at (5, 3);
  
  

  \draw[thick, green!70!black] (-4.75, 2.25) .. controls (-1.75, -0.25) .. (-1.5, -2.75);

  \coordinate (RTheta0) at (-2.1, -0.25);
  \coordinate (Rstar) at ($(RTheta0) + (60:1.5)$);
  
  \fill[green!70!black] (RTheta0) circle (1.5pt);
  \fill[black] (Rstar) circle (1.5pt);

  \draw[semithick, gray!60] ($(RTheta0) + (-0.5, -0.4)$) rectangle ($(Rstar) + (0.3, 0.3)$);
  \node[black, left=2pt] at (Rstar) {$r^*$};
  \node[green!70!black, below left=2pt] at (RTheta0) {$r_{\theta_0}$};

  \coordinate (RProj1) at ($(RTheta0) + (-0.4, 0.5)$);
  \coordinate (RProj2) at ($(RTheta0) + (0.18, -0.3)$);
  
  \fill[orange!80!red] (RProj1) circle (1pt);
  \fill[orange!80!red] (RProj2) circle (1pt);
  
  \draw[-{stealth[scale=2]}, thick, orange!80!red] (Rstar) -- (RProj1);
  \draw[-{stealth[scale=2]}, thick, orange!80!red] (Rstar) -- (RProj2);

  \draw[semithick, gray!60] (0.25, -2.5) rectangle (4.5, 2.5);

  \coordinate (RTheta0_zoom) at (2, -1.2);
  \fill[green!70!black] (RTheta0_zoom) circle (1.5pt);
  \node[green!70!black, below left=2pt] at (RTheta0_zoom) {$r_{\theta_0}$};

  \coordinate (Rstar_zoom) at ($(RTheta0_zoom) + (60:3.8)$);
  \fill[black] (Rstar_zoom) circle (1.5pt);
  \node[black, right=2pt] at (Rstar_zoom) {$r^*$};

  \draw[thick, green!70!black] ($(RTheta0_zoom) + 1.05*(-1.5, 2.5)$) -- ($(RTheta0_zoom) + 0.45*(1.5, -2.5)$);

  \coordinate (RThetaRLHF) at ($(RTheta0_zoom) + 0.33*(-1.5, 2.5)$);
  \fill[blue!70!cyan] (RThetaRLHF) circle (1.5pt);
  \node[blue!70!cyan, left=2pt] at (RThetaRLHF) {$r_{\theta_{\text{RLHF}}}$};

  \draw[dashed, blue!70!cyan] ($(RThetaRLHF) + 0.3*($(RThetaRLHF) - (Rstar_zoom)$)$) -- ($(Rstar_zoom) + 0.1*($(Rstar_zoom) - (RThetaRLHF)$)$);

  \draw[thin, gray!70, dashed] ($(Rstar) + (0.3, 0.3)$) -- (0.25, 2.5);
  \draw[thin, gray!70, dashed] ($(Rstar) + (0.3, -1.7)$) -- (0.25, -2.5);

  \node[orange!80!red, align=center, font=\footnotesize] at (-3.5, -0.5) {DPO solutions\\based on\\preference data\\distribution};

  \node[green!70!black, align=center, font=\footnotesize] at (-2.7, -2.5) {DPO's implicit reward\\manifold};

  \node[blue!70!cyan, right=8pt, font=\footnotesize] at (RThetaRLHF) {Ideal RLHF solution};

  \node[green!70!black, align=center, font=\footnotesize] at (3.5, -2.2) {Locally linearized\\manifold};

  \node[black, above=18pt, align=center, font=\footnotesize] at (Rstar) {True\\reward function};

\coordinate (midpoint) at ($(RThetaRLHF)!0.45!(Rstar_zoom)$);
\coordinate (midpoint_above) at ($(midpoint) + (0, 0.5)$);
\coordinate (RThetaRLHF_above) at ($(RThetaRLHF) + (0, 0.5)$);
\coordinate (Rstar_zoom_above) at ($(Rstar_zoom) + (0, 0.5)$);
\node[blue!70!cyan, font=\footnotesize, sloped, rotate=47, align=left] at (midpoint_above) {AuxDPO adds\\degrees of freedom};

\draw[-stealth, semithick, blue!70!cyan] ($(midpoint_above) + 0.3*($(RThetaRLHF_above) - (Rstar_zoom_above)$)$) -- 
                             ($(RThetaRLHF_above) + -0.05*($(midpoint_above) - (RThetaRLHF_above)$)$);
\draw[-stealth, semithick, blue!70!cyan] ($(midpoint_above) + 0.3*($(Rstar_zoom_above) - (RThetaRLHF_above)$)$) -- 
                             ($(Rstar_zoom_above) + 0.15*($(midpoint_above) - (Rstar_zoom_above)$)$);

\end{tikzpicture}
}
    \caption{{\footnotesize The geometry of DPO for parametric policies. (Left) DPO essentially performs a projection of the true preference-generating reward function ($r^*$ in black) onto the manifold of reward functions implicitly expressed by the policy class. If $r^*$ is in the manifold, then DPO finds the correct KL-regularized RLHF policy, but otherwise, the policy found (any orange point) is unreliable. (Right, zoomed inset) Locally linearizing the manifold around the base policy's implicit reward function ($r_{\theta_0}$) uncovers geometric insights. To reliably drive the solution to the reward function corresponding to the ideal RLHF solution ($r_{\theta_{\text{RLHF}}}$ in blue), AuxDPO introduces additional controlled degrees of freedom, along the null space of a base-policy dependent matrix to sidestep misspecification.}}
    \label{fig:global}
\vspace{-10pt}
\end{wrapfigure}
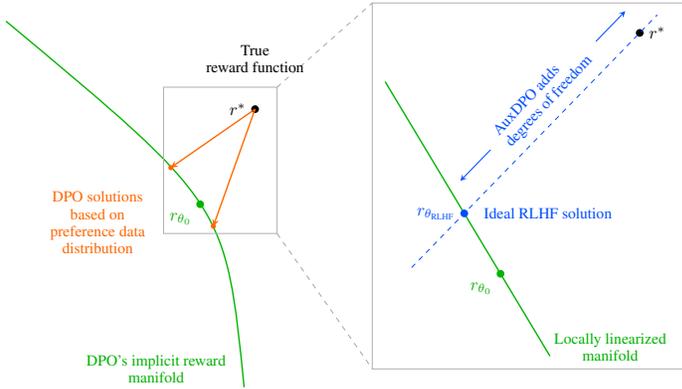

We address these questions by introducing a systematic framework to uncover the geometry of direct preference optimization in parametric policy classes. Our study helps show how DPO essentially solves a misspecified statistical estimation problem in the space of reward functions that are implicitly parameterized by the underlying policy class. Misspecified estimation problems have been analyzed in the statistics literature for exhibiting undesirable phenomena, such as inconsistent and arbitrary estimates that are sensitive to the input data distribution \citep{white1982maximum}. We show that such phenomena also manifest in the DPO setting. Our analysis framework also enables us to modify DPO in a principled manner, yielding a new algorithm (AuxDPO) that achieves the performance of two-stage RLHF in parametric models. More specifically, we make the following contributions:

\begin{enumerate}[leftmargin=1em]
    \item We show that for general parametric policy classes, there is a misspecified statistical estimation problem at the core of the DPO algorithm by design: DPO loss minimization is equivalent to a weighted KL-projection of the true reward function $r^*$ onto the (parametric, lower-dimensional) manifold of reward functions induced by the policy class. The weights of the projection are governed by the preference data collection frequencies (Fig. \ref{fig:global}, left).

    \item  We show that DPO, in the misspecified setting, can suffer from various failure modes such as order reversal of preferences, overall reward reduction, sensitivity to preference data frequencies, etc. These failure modes occur even with `clean data', i.e., infinite preference data generated using a BTL model based on an underlying true reward function $r^*$ and fed to DPO. Our analysis is based on taking a local, linearized view of DPO's implicit reward function manifold, which is accurate in the large-$\beta$ regime.

    \item 
    On the other hand, studying the local geometry of two-stage RLHF for general parametric policy classes yields new insights about linear equivalence classes of reward functions. We use these insights to design AuxDPO, a new direct preference optimization algorithm that effectively mitigates the misspecification issue by introducing auxiliary controlled degress of freedom in  reward space (Fig. \ref{fig:global}, right). We demonstrate the effectiveness of AuxDPO in experiments. On real-world LLM preference tuning tasks, AuxDPO consistently outperforms DPO in aligning to held-out human preferences, confirming its practical value.
\end{enumerate}

{\bf Related work.} A recent line of work focuses on studying the insufficiency and implications of the tabular policy class assumption. \citet{gao2024rebel} and \citet{swamy2025all} call into question the tabular policy class assumption in the context of original two-stage RLHF. \citet{tajwar2024preference} carry out an empirical investigation and note that the standard DPO loss can inadvertently reduce the model’s absolute likelihood of chosen responses as long as the relative probability between chosen and rejected responses increases. \citet{meng2024simpo} and \citet{xu2024contrastive} propose fixes based on considerations of margin and length normalization, and elimination of the reference policy. 

\citet{xu2024dpo} and \citet{song2024importance} study the shortcomings of DPO arising from a lack of coverage, arguing that DPO can fail if a strong coverage condition is not met. The latter provides a counterexample to this end, showing the existence of an implicitly expressible reward function that is $\epsilon$-approximately close to the true reward function but corresponds to a policy not in the KL neighborhood of the base policy. It is, however, unclear if such an implicit reward function can actually be output by DPO. 
Our fine-grained analysis in this paper shows that even with perfect coverage (uniform base policy), the policy returned by DPO can suffer from pathologies such as preference reordering and a decrease of overall expected reward  (Proposition \ref{prop:toy}). 

A separate line of work focuses on the gradient dynamics of DPO loss optimization and its impact on policy probabilities \citep{pal2024smaug, razin2024unintentional, jian2025stable}. It is shown that an individual gradient step on the standard DPO loss can result in likelihood displacement, where the probability of preferred responses can drop relative to the base policy for a gradient step. Our approach eschews assuming any specific optimization algorithm such as gradient descent and considering individual gradient steps, and instead focuses on showing failure modes such as likelihood displacement, preference reversal, reward reduction, etc. by studying the minimizer of the DPO loss. 

\citet{shi2025understanding}, perhaps the closest in spirit to our study, demonstrates a multi-armed bandit example that, when subjected to DPO with a log-linear policy class, does not cause any movement from the base policy. However, this example relies on a degenerate and symmetric reward function; we are able to demonstrate, in a fine-grained manner, that the policy can strictly worsen from the base policy in a manner that is highly sensitive to the preference data distribution (Proposition \ref{prop:toy}).

\section{Preliminaries}
\label{sec:prelims}

Let $\cD=\left(s^{(i)},a_w^{(i)},a_l^{(i)}\right)_{i=1}^{n}$ be a dataset of $n$ samples, where each sample has a prompt $s \in \cS$, two responses $a_w, a_l \in \cA$ such that $a_w \succ a_l$, i.e., $a_w$ is a preferred response over $a_l$. We assume both $\cS$ and $\cA$ are finite sets, with $\abs{\cS}\cdot\abs{\cA}=m$. The prompt $s$ is sampled from a distribution $\rho$ over $\cS$. The pair of responses $(a_w, a_l)$ is sampled from some base (reference) policy $\pi_{\text{ref}}$ conditioned on $s$, i.e., $a_w, a_l \sim \pi_{\text{ref}}(\cdot|s)$. The preference ordering between a pair of responses is assumed to be sampled according to a Bradley-Terry-Luce (BTL) model: the probability of $a$ being preferred to $a'$ is given by  $p_{s,a,a'}^{\BTL}(r^*) = \sigma\!\left(r^*(s,a)- r^*(s,a') \right)$, where $r^*: \cS \times \cA \to \Real $ is a (latent) reward function and $\sigma(z) := \frac{1}{1+ e^{-z}}$ is the sigmoid function.

Let $\pi_\theta: \cS \to \Delta(\cA)$ be a policy (e.g., a language model) smoothly parameterized by a $d$-dimensional vector $\theta \in  \Real^d$ (e.g., the weights of a transformer), where $\Delta(\cA)$ denotes the probability simplex over $\cA$. Let $\theta_0\in \Real^d$ be the parameter for the base policy $\pi_{\text{ref}}$ so that $\pi_{\text{ref}}=\pi_{\theta_0}$. A special case is the tabular policy class, where $d=m=|S|\cdot|A|$ and $\pi_\theta(a|s) =\theta_{s,a}$ (assuming w.l.o.g. that $\sum_{s,a} \theta_{s,a}=1$). However, LLM policy classes are structured and non-tabular with parameter dimension $d \ll m$, e.g., the neural softmax policy $\pi_\theta(a\mid s) = \frac{\exp(f_\theta(s,a))}{\sum_{a'\in \cA}\exp(f_\theta(s,a'))}$,
where $f_\theta$ is, say, a neural network.

For a given reward function $r^*$, the optimal policy in a KL-regularized sense is obtained by maximizing the following objective:
\begin{align}\label{eq:RLHF}
J(\theta;r^*)=\mathbb{E}_{\rho, \pi_\theta}\left[r^*(s,a)-\beta\log\frac{\pi_\theta(a|s)}{\pi_{\theta_0}(a|s)}\right] = \mathbb{E}_{\rho, \pi_\theta}\left[r^*(s,a) - \beta D_\KL(\pi_\theta(\cdot|s) ||\pi_{\theta_0}(\cdot|s)) \right],
\end{align}
where $\mathbb{E}_{\rho, \pi_\theta}$ denotes expectation taken over $s\sim\rho(\cdot)$ and $a \sim \pi_\theta(\cdot \mid s)$, and $\beta > 0$ is a parameter that controls the amount of deviation from the base policy. We assume that $\theta^* \in \Real^d$ is the unique minimizer of \eqref{eq:RLHF}. For our analytical results, we will focus on the `local' case $\beta \gg 1$, meaning that the policy is not allowed to move beyond a local neighborhood of $\pi_{\theta_0}$.
When the policy class is tabular, it follows that the optimal policy $\pi_{\theta^*}$ and the latent reward $r^*$ satisfy 
\begin{align}\label{eq:reward-policy-equiv}
    \pi_{\theta^*}(a|s) = \frac{1}{Z^*(s)}\pi_{\theta_0}(a|s)\exp(r^*(s,a)/\beta) \iff r^*(s,a) = \beta \log\frac{\pi_{\theta^*}(a|s)}{\pi_{\theta_0}(a|s)} + \beta \log Z^*(s)~,
\end{align}
where $Z^*(s)= \sum_{a\in \cA}\pi_{\theta_0}(a|s)\exp(r^*(s,a)/\beta)$ is the normalizing or partition function \citep{rafailov2023direct}. Under this reward-policy equivalence, the preference probabilities under the BTL model can be expressed using the optimal policy $\pi_{\theta^*}$ and the base policy $\pi_{\theta_0}$ as follows.
\begin{align*}
p_{s,a,a'}^{\text{BTL}}(r^*)= \sigma\!\left(\!\beta \log\!\frac{\pi_{\theta^*}(a|s)}{\pi_{\theta_0}(a|s)}\!-\!\beta \log\!\frac{\pi_{\theta^*}(a'|s)}{\pi_{\theta_0}(a'|s)}\!\right)= \sigma \left(r^\beta_{\theta^*}(s,a)-r^\beta_{\theta^*}(s,a') \right)~,
\end{align*}
where, for any $\theta \in \Theta$, $r^\beta_\theta: \cS \times \cA \to \Real$ defined via $r^\beta_\theta(s,a) := \beta \log\!\frac{\pi_{\theta}(a|s)}{\pi_{\theta_0}(a|s)}$ denotes the implicit reward function corresponding to the policy $\pi_\theta$ at deviation level $\beta$ (note that $r^\beta_{\theta_0} \equiv 0$ by definition). Let 
$\cR^\beta = \left\lbrace r^\beta_\theta: \theta \in \Real^d\right \rbrace \subsetneq \Real^m$ be the set of all implicit reward functions induced by the policy parameters $\theta$ at deviation level $\beta$. Given the dataset $\cD$, DPO \citep{rafailov2023direct} finds the minimizer of the {\em empirical} DPO loss
\begin{align*}
   \!\! \cL_{\cD}(\theta) \!=\! -\!\sum_{i=1}^n \!\log\sigma\!\left(\!r^\beta_\theta(s^{(i)},a_w^{(i)})\!-\!r^\beta_\theta(s^{(i)},a_l^{(i)})\!\right)\!= \!-\!\!\sum_{s,a_w, a_l} \!\!N_{s,a_w,a_l} \log\sigma\!\left(r^\beta_\theta(s,a_w)\!-\!r^\beta_\theta(s,a_l)\!\right),
\end{align*}
where $N_{s,a_w,a_l}$ is the total number of pairwise preferences for which $a_w \succ a_l$ at $s$. If $n_{s,a,a'}$ denotes the total number of pairwise preferences for the triplet $(s,a,a')$ (assumed non-random and fixed in advance), then $N_{s,a_w,a_l} \sim \texttt{Binomial}\left(n_{s,a_w,a_l}, p_{s,a_w,a_l}^{\BTL}(r^*)\right)$, yielding the {\em population} DPO loss
\begin{align}\label{eq:DPO-population}
   \cL(\theta)  = -\sum_{s,a,a'} n_{s,a,a'} \left[ p_{s,a,a'}^{\BTL}(r^*) \log p_{s,a,a'}^{\BTL}(r^\beta_\theta) + (1 - p_{s,a,a'}^{\BTL}(r^*)\log \big(1-p_{s,a,a'}^{\BTL}(r^\beta_\theta)\big) \right].
\end{align}
We take up this loss as our main object of study for DPO in the sequel.

\section{Reward Misspecification in DPO}

Let $r^*$ denote the $m$-dimensional vector of latent rewards $\left( r^*(s,a)\right)_{s,a}$, by slightly abusing notation.

\begin{restatable}[DPO is weighted KL-projection]{proposition}{est}
\label{prop:est}
Assume that the pairwise preference data are drawn from  $p_{s,a_w,a_l}^{\BTL}(r^*)$ for some $r^* \in \mathbb{R}^m$, with $n_{s,a,a'}$ preference pairs drawn for each triplet $(s,a,a')$. If $\theta_{\text{DPO}}$ minimizes the DPO loss \eqref{eq:DPO-population}, then its corresponding implicit reward function satisfies
\begin{align}\label{eq:reverse-kl-project}
r^\beta_{\theta_{\text{DPO}}} = \arg\min_{r \in \cR^\beta} \sum_{s, a,a'} n_{s,a,a'} \, d_{\KL}\big(p^{\BTL}_{s,a,a'}(r^*) || \,p^{\BTL}_{s,a,a'}(r)\big)~,
\end{align}
where $d_\KL(p||q)$ denotes the KL divergence b/w two Bernoulli random variables with parameters $p,q$.
\end{restatable}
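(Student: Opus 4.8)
The plan is to recognize the population DPO loss \eqref{eq:DPO-population} as a weighted sum of Bernoulli cross-entropies, convert it to a weighted sum of KL divergences by peeling off a $\theta$-independent entropy term, and then push the minimization through the reparametrization $\theta \mapsto r^\beta_\theta$ onto the image manifold $\cR^\beta$.

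Concretely, the first step is the elementary Bernoulli identity: for parameters $p,q\in(0,1)$, the cross-entropy $H(p,q) := -p\log q-(1-p)\log(1-q)$ decomposes as $H(p,q) = d_\KL(p\,\|\,q) + H(p)$, where $H(p) := -p\log p-(1-p)\log(1-p)$. Applying this termwise in \eqref{eq:DPO-population} with $p = p^{\BTL}_{s,a,a'}(r^*)$ and $q = p^{\BTL}_{s,a,a'}(r^\beta_\theta)$ gives
\[
\cL(\theta) \;=\; \sum_{s,a,a'} n_{s,a,a'}\, d_\KL\!\big(p^{\BTL}_{s,a,a'}(r^*)\,\big\|\,p^{\BTL}_{s,a,a'}(r^\beta_\theta)\big) \;+\; C,
\]
where $C = \sum_{s,a,a'} n_{s,a,a'}\, H\!\big(p^{\BTL}_{s,a,a'}(r^*)\big)$ depends only on $r^*$ and the (fixed) counts $n_{s,a,a'}$, not on $\theta$. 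Hence $\theta_{\text{DPO}}$ minimizes $\cL$ iff it minimizes the weighted-KL functional on the right.

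The second step is to note that this functional depends on $\theta$ only through the induced implicit reward $r^\beta_\theta$: each summand is a function of $p^{\BTL}_{s,a,a'}(r^\beta_\theta) = \sigma\!\big(r^\beta_\theta(s,a)-r^\beta_\theta(s,a')\big)$, which is determined by $r^\beta_\theta \in \cR^\beta$. Writing the functional as $F(r) := \sum_{s,a,a'} n_{s,a,a'}\, d_\KL\big(p^{\BTL}_{s,a,a'}(r^*)\,\|\,p^{\BTL}_{s,a,a'}(r)\big)$, we have $\cL(\theta) = F(r^\beta_\theta) + C$, so minimizing over $\theta\in\Real^d$ is the same as minimizing $F$ over the image set $\cR^\beta = \{r^\beta_\theta : \theta\in\Real^d\}$, and the minimizing value is attained at $r = r^\beta_{\theta_{\text{DPO}}}$. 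This is precisely \eqref{eq:reverse-kl-project}.

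I expect the substantive content — the cross-entropy/KL identity and the transfer of the minimization onto $\cR^\beta$ — to be short; the only point needing care is the well-definedness of the $\arg\min$ on the right of \eqref{eq:reverse-kl-project}. Since $\cR^\beta$ is in general a curved, non-convex subset of $\Real^m$, $F$ restricted to $\cR^\beta$ need not have a unique minimizer a priori, so strictly one should read \eqref{eq:reverse-kl-project} as ``$r^\beta_{\theta_{\text{DPO}}}$ is a minimizer,'' or else invoke the standing uniqueness of $\theta_{\text{DPO}}$ (equivalently, of its implicit reward) as already assumed for the RLHF optimum. No convexity, differentiability, or coverage assumptions on the policy class beyond those already in place are needed for the argument.
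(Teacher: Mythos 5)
Your proof is correct and follows essentially the same route as the paper's: the Bernoulli cross-entropy/KL decomposition to strip off the $\theta$-independent entropy term, followed by transferring the minimization from $\theta$ to the image set $\cR^\beta$. Your added caveat about reading the $\arg\min$ as ``a minimizer'' when $\cR^\beta$ is non-convex is a reasonable refinement the paper leaves implicit.
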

The result establishes that DPO projects (according to reverse-KL divergence weighted by pairwise preference counts $n_{s,a,a'}$) the true reward function $r^*$ onto the set of implicit reward functions $\cR^\beta$ (with the corresponding policy being returned after fine-tuning). It implies that if $r^*$ is realizable, i.e., $r^* = r^\beta_\theta$ for some $\theta$, then this projection (trivially) finds $r^\beta_\theta$ and hence the policy $\pi_\theta$, which coincides with the RLHF policy, i.e., $\theta = \theta^*$. 

However, if $r^* \notin \cR^\beta$ (which is typically the case since $\cR^\beta$ is a lower-dimensional ($d$-dimensional) manifold of $\Real^m$), then we are in the misspecified estimation setting. In this case, the result of the KL-projection will, in general, be dependent on the exact weighted projection which is determined by the preference data frequencies $\left( n_{s,a,a'} \right)_{s,a,a'}$. We demonstrate, in the next section, that the policies resulting from DPO's misspecified estimation enjoy no guarantees: they could suffer from preference reversal, or even worse, yield a lower average reward than even the base policy (contrary to two-stage RLHF, where the average reward can never decrease).

\subsection{Local Geometry of DPO} 

Let us locally approximate the implicit reward $r^\beta_{\theta}(s,a)$ via its first-order Taylor expansion around $\theta_0$:
\begin{align*}
r^\beta_{\theta}(s,a) \approx r^\beta_{\theta_0}(s,a) 
+ \big\langle \nabla r_{\theta_0}(s,a), \, \theta - \theta_0 \big\rangle = \beta \big\langle \tfrac{\nabla \pi_{\theta_0}(a \mid s)}{\pi_{\theta_0}(a \mid s)}  ,  \theta - \theta_0 \big\rangle = \beta \nabla \log \pi_{\theta_0}(a|s)^\top (\theta-\theta_0)~.
\end{align*}
Define the $d \times m$
matrix $A_{\theta_0}$ as the matrix containing $\nabla \log \pi_{\theta_0}(a|s)$ in its $(s,a)$-th column. 
With this, the local linear approximation of $r^\beta_\theta$ takes the form $\overline r^\beta_\theta = \beta A_{\theta_0}^\top (\theta-\theta_0)$. Since
$\left\lbrace \overline r^\beta_\theta: \theta \in \Real^d\right \rbrace$ is the column space of $A_{\theta}^\top$, we arrive at the linear manifold approximation $\cR^\beta \approx \cC\big( A_{\theta_0}^{\top} \big)$, for $\theta$ in the local neighborhood of $\theta_0$. Note that this linear approximation is independent of the choice of $\beta$ and is a function only of the policy class and base policy $\pi_{\theta_0}$.
\begin{remark}  
    \label{rem:errorDPO}
    The error in the linear approximation of the manifold $\cR^\beta$ by $\cC\big( A_{\theta_0}^{\top} \big)$ can be controlled to within any arbitrary tolerance by taking the policy deviation parameter $\beta$ to be sufficiently large. Proposition~\ref{prop:error} formally controls the approximation error. In the sequel, we only work with the linear approximation to develop our results. 
\end{remark}

Armed with this local linearization of the implicit reward manifold, we now show an example of a single-prompt, 3-response setting, with a 1-dimensional policy parameter, in which DPO exhibits counterintuitive and unexpected behavior, including preference reversal, reward reduction, and high sensitivity to the pairwise preference data counts $n_{s,a,a'}$. 

\begin{proposition}[Example of DPO with preference reversal and reward decrease]
\label{prop:toy}
    There exists a \emph{promptless} policy optimization problem with three responses and linear softmax policy class parameterized with a 1-dimensional parameter $\theta$ and a true reward vector $r^* \in \mathbb{R}^3$ such that DPO, carried out with pairwise preferences generated according to BTL($r^*$), sufficiently large $\beta$ and pairwise counts $\{n_{i,j}\}$, yields a policy $\pi_\theta$ such that (i) $\pi_\theta$ favors the response with second highest reward, (ii) $\pi_\theta$ decreases (resp. increases) the probability of the action with the highest reward (resp. second highest reward) w.r.t. the base policy $\pi_{\theta_0}$, and (iii) $\pi_{\theta}^\top r^* < \pi_{\theta_0}^\top r^*$.
\end{proposition}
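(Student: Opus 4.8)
The plan is to produce an explicit instance and verify the three properties by direct computation. I take a single trivial prompt, response set $\mathcal{A}=\{1,2,3\}$, and the one‑dimensional log‑linear policy class $\pi_\theta(a)\propto\exp(\theta\phi_a)$ with feature scalars $(\phi_1,\phi_2,\phi_3)=(-2,3,-1)$ and base parameter $\theta_0=0$ (so $\pi_{\theta_0}$ is uniform, which also forecloses any ``coverage'' explanation for the pathology). I fix the latent reward $r^*=(2,1,0)$, so that $r^*_1>r^*_2>r^*_3$. The structural fact that makes this tractable is that in a promptless log‑linear class the partition function cancels in every pairwise difference of implicit rewards: $r^\beta_\theta(i)-r^\beta_\theta(j)=\beta\theta(\phi_i-\phi_j)$ \emph{exactly}. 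Hence, after the change of variables $u=\beta\theta$, the population DPO loss \eqref{eq:DPO-population} becomes a $\beta$‑independent, strictly convex, coercive function of the single scalar $u$ (the $(2,3)$‑term alone forces strict convexity and coercivity since $\phi_2\neq\phi_3$), so no linearization (cf.\ Remark~\ref{rem:errorDPO}) is actually needed and the conclusions will in fact hold for every $\beta>0$; its unique minimizer is $\theta_{\text{DPO}}=u_{\text{DPO}}/\beta$.

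The one genuine computation is the sign of $u_{\text{DPO}}$. The stationarity condition of the reparametrized loss reads $f(u):=\sum_{i<j}n_{ij}(\phi_i-\phi_j)[\sigma(u(\phi_i-\phi_j))-\sigma(r^*_i-r^*_j)]=0$. Since $f$ is strictly increasing (its derivative is a positive combination of $\sigma'$ values) with $f(-\infty)<0<f(+\infty)$, it has a unique zero, and that zero is positive iff $f(0)<0$. Now $f(0)=\sum_{i<j}n_{ij}(\phi_i-\phi_j)(\tfrac12-\sigma(r^*_i-r^*_j))$, and with our labeling $\sigma(r^*_i-r^*_j)>\tfrac12$ for all three pairs; because $\phi_2-\phi_3=4>0$ while $\phi_1-\phi_2<0$ and $\phi_1-\phi_3<0$, choosing the counts so that the $(2,3)$‑comparison is weighted heavily enough (e.g.\ $n_{12}=n_{13}=1$, $n_{23}=2$, or most simply $n_{12}=n_{13}=0$ with any $n_{23}>0$, in which case DPO sees only \emph{correct} $2$‑vs‑$3$ preferences yet is still forced by the one‑dimensional model onto $\theta_{\text{DPO}}=(r^*_2-r^*_3)/(\beta(\phi_2-\phi_3))$) makes $f(0)<0$, hence $\theta_{\text{DPO}}>0$. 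Equivalently, one can read $u_{\text{DPO}}>0$ off the weighted‑KL‑projection description \eqref{eq:reverse-kl-project} of Proposition~\ref{prop:est}.

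Given $\theta_{\text{DPO}}>0$, the three claims reduce to elementary monotonicity of the softmax. Because $\pi_\theta(i)>\pi_\theta(j)\iff\theta(\phi_i-\phi_j)>0$ and $\phi_2>\phi_3>\phi_1$, we get $\pi_{\theta_{\text{DPO}}}(2)>\pi_{\theta_{\text{DPO}}}(3)>\pi_{\theta_{\text{DPO}}}(1)$; since response $2$ carries the \emph{second}‑highest reward, this is (i). For (ii), $\tfrac{d}{d\theta}\log\pi_\theta(a)=\phi_a-\mathbb{E}_{\pi_\theta}[\phi]$, and since $\phi_2$ is the largest and $\phi_1$ the smallest feature while $\pi_\theta$ has full support, this derivative is $>0$ for $a=2$ and $<0$ for $a=1$ at every $\theta\ge0$; integrating from $0$ to $\theta_{\text{DPO}}$ yields $\pi_{\theta_{\text{DPO}}}(2)>\tfrac13=\pi_{\theta_0}(2)$ and $\pi_{\theta_{\text{DPO}}}(1)<\tfrac13=\pi_{\theta_0}(1)$. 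For (iii), the choice $r^*=(2,1,0)$ gives, using $\sum_a\pi_\theta(a)=1$ and uniformity of $\pi_{\theta_0}$, the identity $\pi_\theta^\top r^*-\pi_{\theta_0}^\top r^*=\pi_\theta(1)-\pi_\theta(3)$, which is negative by the ordering already established, so (iii) is in fact contained in (i).

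The main obstacle is not any single step but the joint feasibility: one scalar degree of freedom must simultaneously (a) be pushed in the direction dictated by the available pairwise comparisons and (b) produce a policy whose response ordering and whose shift relative to the uniform base are both ``wrong'' with respect to $r^*$. This is exactly what the deliberate anti‑alignment between the feature ordering $\phi_2>\phi_3>\phi_1$ and the reward ordering $r^*_1>r^*_2>r^*_3$ buys us; once the instance is fixed, only the check $f(0)<0$ genuinely uses the counts $\{n_{ij}\}$, and everything else is bookkeeping. I do not anticipate any difficulty beyond making the constants line up, which the choice above does.
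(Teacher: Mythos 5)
Your construction is correct, and it reaches the three conclusions by a genuinely different and in fact tighter route than the paper. The paper's proof works through the linearized picture: it approximates the implicit reward manifold by $\cC(A_{\theta_0}^\top)=\mathrm{Span}([1,-1,0])$, argues qualitatively that a dominant count ($n_{3,1}$ there) drags the weighted KL projection of Proposition~\ref{prop:est} to a point $[\alpha,-\alpha,0]$ with $\alpha>0$, and then passes back to the policy via the approximate reward--policy relation \eqref{eq:eq:reward-policy-equiv-approx}; all of this leans on the large-$\beta$ approximation guarantees of Proposition~\ref{prop:error}. You instead exploit the exact cancellation of the partition function in pairwise implicit-reward differences for a promptless log-linear class, so the population DPO loss collapses to a strictly convex, coercive scalar function of $u=\beta\theta$ whose unique minimizer's sign is read off from the explicit score $f(0)=\sum_{i<j}n_{ij}(\phi_i-\phi_j)(\tfrac12-\sigma(r^*_i-r^*_j))$. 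This removes the linearization entirely, makes the conclusion hold for \emph{every} $\beta>0$ rather than only sufficiently large $\beta$, and replaces the paper's ``$O(\alpha)$'' bookkeeping with a closed-form sign computation; your degenerate choice $n_{12}=n_{13}=0$ also sharpens the message, since DPO then sees only correct preferences yet still reverses the order. What the paper's version buys in exchange is that its linearized-manifold argument is the one that generalizes beyond one-dimensional log-linear examples and feeds directly into the AuxDPO construction; your argument is specific to the scalar promptless structure. Both instances rest on the same essential device --- anti-aligning the feature ordering with the reward ordering --- and all the downstream checks (softmax monotonicity for (i)--(ii), the identity $\pi_\theta^\top r^*-\pi_{\theta_0}^\top r^*=\pi_\theta(1)-\pi_\theta(3)$ for (iii)) are verified correctly.
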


\begin{wrapfigure}[25]{r}{0.4\textwidth}
\vspace{-8pt}
    \resizebox{\linewidth}{!}{%
     \begin{tikzpicture}[scale=1.5]
      \tikzset{
        blueline/.style={thick, blue!70!blue, opacity=0.9},
        redline/.style={thick, red!70!red, opacity=0.9},
        brownline/.style={thick, brown!70!brown, opacity=0.9},
        yellowhighlight/.style={line width=3mm, yellow!80!orange, opacity=0.4},
        orangehighlight/.style={line width=3mm, orange!80!orange, opacity=0.4},
        greenpoint/.style={fill=green!70!black, circle, inner sep=1.5pt},
        bluepoint/.style={fill=blue!70!cyan, circle, inner sep=1.5pt, text=blue!70!cyan},
         blackpoint/.style={fill=black!70!black, circle, inner sep=1.5pt, text=blue!70!cyan},
        yellowpoint/.style={fill=yellow!80!orange, circle, inner sep=1.5pt},
        redpoint/.style={fill=red!80!red, circle, inner sep=1.5pt},
        orangepoint/.style={fill=orange!80!red, circle, inner sep=1.5pt},
        bluestar/.style={star, star points=5, star point ratio=2.5, fill=blue!70!cyan, inner sep=0.5pt, minimum size=10pt},
        yellowstar/.style={star, star points=5, star point ratio=2.5, fill=yellow!80!orange, inner sep=0.5pt, minimum size=10pt},
        graytext/.style={text=gray!70!black},
      }

     \begin{scope}[shift={(3,0)}]
     \fill[gray!10, rounded corners=10pt] (-2.5,-1.7) rectangle (2.6,2.5);
      \draw[->] (0,0) -- (2.2,0) node[right] {$r_1$};
      \draw[->] (0,0) -- (0,2.2) node[above] {$r_2$};
    
      \draw[redline, name path=spanplane] (-2.2,2.2) -- (1.2,-1.2);
    
        \draw[dashed, gray!70!black] (1,2) -- (-2,2);
        \draw[dashed, gray!70!black] (1,2) -- (1,-1);
    
        \draw[dashed, thick, red!90!red] (1,2) -- (0.5,-0.5);
      
      \draw[orangehighlight, name path=thetaplane] (-2,2) -- (1,-1);
    
    
     \draw[<->, thick] (-0.2,-0.2) -- (0.8,-1.2)node[midway, left] {$\theta>0$};
      \draw[<->, thick] (-0.2,-0.2) -- (-2.2,1.8)node[midway, left] {$\theta < 0$};
      
      \node[redpoint] (rtheta) at (0.5,-0.5) {};
      \node[blackpoint] (rtheta0) at (0,0) {};
     
      \node[bluepoint] (rstar) at (1,2) {};
      
      \node[anchor=south] at (rstar) {$r^*$};
      \node[anchor=west] at (rtheta) {$r^\beta_\theta$};
      \node[above right=0.25pt] at (rtheta0) {$r^\beta_{\theta_0}$};
       
      \node[red!80!red, align=left] at (1.5,-1.4) {$\cC(A_{\theta_0}^{\top})$};
      
      \end{scope}
      
    \end{tikzpicture}
}
  \caption{\footnotesize{An example with 3 responses and 1-d policy parameter showing failure modes of DPO. $r^*$ is the latent reward. The \textcolor{red}{red} line denotes the linear approximation $\cC(A_{\theta_0}^\top)$ of the implicit reward manifold $\cR^\beta$. The region shaded in \textcolor{orange}{orange} represents all possible implicit reward functions that DPO can possibly project onto, depending on the relative proportion of pairwise preference counts $n_{1,2}, n_{2,3}, n_{3,1}$. If $n_{3,1}$ dominates the rest, then the projection $r^\beta_\theta$ induces a post-optimized policy parameter $\theta > 0$, leading to preference reversal and reduction of expected reward, causing DPO to fail.}}
\end{wrapfigure}
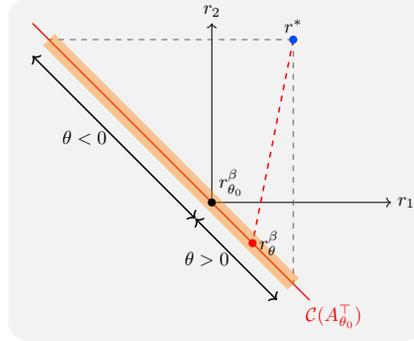

\textit{Proof.} Consider the following example with 3 responses $a_1,a_2,a_3$ with latent rewards \( r^* = [2,3,1] \), which yields the preference order $a_2\succ a_1 \succ a_3$. 
The pairwise preference counts in the dataset are highly imbalanced in a way that $n_{3,1} \gg \max\lbrace n_{1,2},n_{2,3}\rbrace$. The policy is given by $\pi_\theta = \frac{1}{Z}[e^\theta, e^{-\theta}, 1]$~, where \( Z = 1 + e^{\theta} + e^{-\theta} \). We take a uniform base policy (i.e., $\theta_0 =0$). Under the BTL model, $r^* \equiv [1,2,0]$ since both induce the same preference distribution. Hence, we will work with this equivalent $r^*$. The setting is depicted in Figure 2.

In this setting, the policy gradient matrix takes the form $A_{\theta} =
\frac{1}{Z} \left[1 + 2e^{-\theta},
-(1 + 2e^{\theta}),
e^{-\theta} - e^{\theta}\right]$. Hence $A_{\theta_0} = [1,-1,0]$ and thus $\cR^\beta \approx \Span\left([1,-1,0]\right)$ in the local neighborhood of $\theta_0 = 0$. Now, since $n_{1,3}$ dominates the other two comparison counts, the solution to the optimization problem in \eqref{eq:reverse-kl-project}, e.g.,   
\begin{align*}
    \arg\min_{r \in \cR} \sum_{i \neq j} n_{i,j} d_\KL\big(p^{\BTL}_{i,j}(r^*) || \,p^{\BTL}_{i,j}(r)\big)
\end{align*}
    will push the probability that $a_1$ is preferred over $a_3$, i.e., $p^{\BTL}_{1,3}(r^\beta_\theta)$ close to $p^{\BTL}_{1,3}(r^*)$, yielding $r^\beta_\theta(a_1)-r^\beta_\theta(a_3) = O(\alpha)$ for some $\alpha >0$. Moreover, since $r^\beta_\theta$ should lie in $\Span([1,-1,0])$, 
    DPO will end up with the reward function $r^\beta_\theta \approx [\alpha,-\alpha,0]$. This, in turn, will make $a_1$ to be preferred over $a_2$ by the learned reward, indicating a preference reversal from that given by $r^*$. Furthermore, from the equivalence relation \eqref{eq:eq:reward-policy-equiv-approx}, we get the post-optimized policy parameter $\theta = O(\alpha)$. This yields (i) $\pi_\theta(a_1) > \pi_\theta(a_2)$ (the policy favors a suboptimal response) (ii) $\pi_\theta(a_2) < \pi_{\theta_0}(a_2)$ and $\pi_\theta(a_1) > \pi_{\theta_0}(a_1)$ (likelihood of the optimal response decreases and that of a suboptimal response increases) and (iii) $\pi_{\theta}^\top r^* =\frac{e^\alpha+2e^{-\alpha}}{1+e^\alpha + e^{-\alpha}} \!<\! 1 \!=\! \pi_{\theta_0}^\top r^*$ (average reward decreases relative to the base policy).

The example exhibits the following aspects:
\begin{enumerate}[leftmargin=1em]
  \item There are pairs $(i,j)$ (e.g., $i=2, j=1$) where $a_i \succ a_j$ wrt $r^*$ (i.e., $r^*(a_i) > r^*(a_j)$) but $\pi_\theta(a_i)$ decreases and $\pi_\theta(a_j)$ increases w.r.t. the base policy. This is more extreme than likelihood displacement, where $\pi_\theta(a_i)$ and $\pi_\theta(a_j)$ both decrease or increase together but their difference $\pi_\theta(a_i) - \pi_\theta(a_j)$ is presumed to increase~\citep{razin2024unintentional, pal2024smaug}.
  \item The expected reward w.r.t. $r^*$ {\em decreases} from $\pi_{\theta_0}$, whereas two-stage RLHF would have increased the expected reward on any policy class (assuming $r^*$ is learnt accurately in the first stage).
\item Our example is based on the DPO population loss, which is effectively DPO operating in the data-rich regime where unlimited pairwise preference data from a BTL model are used as input. This circumvents the issue of failure modes of DPO known to occur due to scarce data sampling \citep{tajwar2024preference}. We show that failure modes arise due to the inherent misspecified geometry induced by the lack of model capacity, interacting with the frequencies of pairwise sampling.
  \item Our example applies in the strongest possible `oracle' optimization model where we assume that the (population) DPO loss can be optimized {\em exactly}. Our results are not dependent on the idiosyncrasies or specifics of what algorithm is used to optimize the DPO loss (e.g., gradient descent and variants as explicitly considered in other works  \citep{razin2024unintentional, pal2024smaug}), as long as it is (near) optimal.
  \item Sensitivity of DPO w.r.t. preference data distribution: In the example, if the pairwise preference counts are such that $n_{1,2} \gg \{n_{2,3},n_{3,1}\}$, then DPO would learn the desired reward function $r^\beta_\theta \approx [-\alpha,\alpha,0]$, which would, in turn, induce a policy parameter $\theta<0$, escaping the failure modes effectively. Therefore, depending on the relative proportions of pairwise preference counts $\{n_{i,j}\}_{i,j}$, one could either get the desired result or the opposite one, as in the example, or perhaps no movement at all, making DPO sensitive to preference data sampling distribution. This sensitivity to the exact preference distribution also represents a failure mode of DPO. 
\end{enumerate}

\begin{remark}[Global coverage is not sufficient for optimality]
\cite{song2024importance} argued that a global coverage condition
$\max_{s, a} \frac{\pi_\theta(a\mid s)}{\pi_{\theta_0}(a \mid s)} \leq C$ for all $\theta \in \Real^d$ is necessary for DPO to converge to the optimal policy $\theta^*$. A sufficient condition to ensure the above is $\pi_{\theta_0}(a \mid s) \geq \frac{1}{C}$. In our example, $\pi_{\theta_0}$ satisfies this with $C=3$.
However, as we have seen, DPO could learn an unaligned reward model depending on the relative frequency of preference counts, implying that global coverage is not sufficient to ensure convergence to the optimal policy. 
\end{remark}
  
\section{Towards Mitigating DPO's Pitfalls}

We propose to address the misspecification issue of DPO by first studying the nature of the RLHF policy optimization step in a suitably local sense (assuming ideal reward learning), and then using the insights gained to encourage movement towards this solution.

\subsection{Local Geometry of RLHF Optimization} 
\label{sec:RLHFlocalview}

We locally approximate the objective $J(\theta;r^*)$ in \eqref{eq:RLHF} around the base policy $\pi_{\theta_0}$. To do so, we approximate the expected reward using a first-order Taylor series expansion and the KL penalty using a second-order Taylor series expansion: 
 \begin{align*}
\mathbb{E}_{\rho, \pi_\theta}\!\left[r^*(s,a)\right] \approx \mathbb{E}_{\rho, \pi_{\theta_0}}\!\left[r^*(s,a)\right] \!+\!(\theta\!-\!\theta_0)\!^\top \!A_{\theta_0}D_{\rho,\theta_0}r^*, D_\KL(\pi_\theta ||\pi_{\theta_0}) \approx  \frac{1}{2}(\theta\!-\!\theta_0)\!^\top\! F_{\rho,\theta_0}(\theta\!-\!\theta_0)~,
\end{align*}
where $D_{\rho,\theta_0}$ is a diagonal matrix with scaled base policy-probabilities $ \rho(s)\pi_{\theta_0}(a|s)$ in the diagonal entries, and $F_{\rho,\theta_0}= \mathbb{E}_{\rho,\pi_{\theta_0}}\left[\nabla \log\pi_{\theta_0}(a|s) \nabla \log \pi_{\theta_0}(a|s)^\top\right]$ denotes the Fisher information matrix at $\pi_{\theta_0}$~\citep{amari2016information}. 
Introducing the $d \times m$ matrix $A_{\rho,\theta_0}=A_{\theta_0}D_{\rho,\theta_0}$ containing the scaled gradients $\rho(s)\nabla \pi_{\theta_0}(a|s)$ in its columns, we arrive at a local quadratic approximation of the objective
\begin{align*}
   J(\theta; r^*)  \approx \mathbb{E}_{\rho, \pi_{\theta_0}}\!\left[r^*(s,a)\right] + (\theta-\theta_0)^\top A_{\rho,\theta_0}r^* - \frac{\beta}{2}(\theta-\theta_0)^\top F_{\rho,\theta_0}(\theta-\theta_0)~.
\end{align*}
\begin{remark}
    \label{rem:errorRLHF}
    Just as described in Remark \ref{rem:errorDPO}, the error in this local quadratic approximation of $J(\theta; r^*)$ can be controlled to a desired level of accuracy by taking $\beta$ to be sufficiently large; see Proposition \ref{prop:error}.
\end{remark}
Based on this local quadratic approximation of the RLHF objective function, we can deduce (via first-order optimality conditions) a relation that suitably generalizes~\eqref{eq:reward-policy-equiv}, between the optimal policy $\pi_{\theta^*}$ and the latent reward $r^*$, to parametric policy classes:
\begin{align}\label{eq:eq:reward-policy-equiv-approx}
 A_{\rho,\theta_0}r^*  =\beta F_{\rho,\theta_0}(\theta^*-\theta_0)
 \iff     \theta^* = \theta_0+ \frac{1}{\beta} F_{\rho,\theta_0}^{\dagger}A_{\rho,\theta_0} r^*~.
\end{align}
This has the form of a natural policy gradient update \citep{kakade2001natural}. Moreover, it partitions the set of all reward functions into equivalence classes as follows. For each policy parameter $\theta$, define 
\begin{align}\label{eq:DPO-reward-class}
    \cR^\beta_{\text{eq}}(\theta) = \lbrace  r \in \Real^m:  A_{\rho,\theta_0} r = \beta F_{\rho,\theta_0}(\theta-\theta_0)\rbrace~.
\end{align}

\begin{restatable}[Equivalence classes induced by RLHF]{lemma}{equivclass}
\label{lem:equiv}
    For a fixed $\theta\in \Real^d$, two reward vectors $r_1, r_2 \in \Real^m$ belong to $\cR^\beta_{\text{eq}}(\theta)$ if and only if they
 differ by a vector $\delta \in \cN(A_{\rho,\theta_0})$, i.e., a null space element of  $A_{\rho,\theta_0}$. 
\end{restatable}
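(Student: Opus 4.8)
The plan is to recognize that the lemma is a purely linear-algebraic statement about the solution set of an inhomogeneous linear system. By definition~\eqref{eq:DPO-reward-class}, $\cR^\beta_{\text{eq}}(\theta)$ is the preimage of the single vector $b_\theta := \beta F_{\rho,\theta_0}(\theta - \theta_0) \in \Real^d$ under the linear map $r \mapsto A_{\rho,\theta_0}\, r$. The content to be established is then the standard fact that any two solutions of $A_{\rho,\theta_0}\, r = b_\theta$ differ by an element of $\cN(A_{\rho,\theta_0})$, and conversely that translating one solution by a null-space element produces another solution; in other words, $\cR^\beta_{\text{eq}}(\theta)$, when nonempty, is an affine \emph{coset} of $\cN(A_{\rho,\theta_0})$, and the work amounts to unwinding definition~\eqref{eq:DPO-reward-class} and invoking linearity of $A_{\rho,\theta_0}$.

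For the forward direction I would take $r_1, r_2 \in \cR^\beta_{\text{eq}}(\theta)$, so that $A_{\rho,\theta_0}\, r_1 = b_\theta = A_{\rho,\theta_0}\, r_2$; subtracting and using linearity gives $A_{\rho,\theta_0}(r_1 - r_2) = 0$, hence $\delta := r_1 - r_2 \in \cN(A_{\rho,\theta_0})$. For the converse I would fix $r_1 \in \cR^\beta_{\text{eq}}(\theta)$ and any $\delta \in \cN(A_{\rho,\theta_0})$, set $r_2 := r_1 + \delta$, and compute $A_{\rho,\theta_0}\, r_2 = A_{\rho,\theta_0}\, r_1 + A_{\rho,\theta_0}\,\delta = b_\theta + 0 = b_\theta$, so $r_2 \in \cR^\beta_{\text{eq}}(\theta)$ as well. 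Combining the two directions yields the claimed equivalence.

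The only point needing a word of care is that, read literally, the converse presupposes that at least one of $r_1, r_2$ already lies in $\cR^\beta_{\text{eq}}(\theta)$ — two arbitrary vectors differing by a null-space element need not both lie in the class, which could be a different coset or even empty (this happens precisely when $b_\theta \notin \cC(A_{\rho,\theta_0})$). This is not a substantive obstacle: for $\theta = \theta^*$ the optimality relation~\eqref{eq:eq:reward-policy-equiv-approx} exhibits $r^*$ itself as a member, so $\cR^\beta_{\text{eq}}(\theta^*) \ni r^*$ is nonempty, and more generally the class is nonempty for every $\theta$ of the form $\theta_0 + \tfrac{1}{\beta} F_{\rho,\theta_0}^{\dagger} A_{\rho,\theta_0}\, r$, which are exactly the parameters relevant to the analysis. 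I would therefore state the equivalence with this anchoring implicit. Beyond this bookkeeping there is no difficulty — the entire argument is the coset structure of the solution set of a linear equation.
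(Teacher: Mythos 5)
Your proof is correct and is essentially identical to the paper's, which likewise reduces the claim to the chain $A_{\rho,\theta_0}r_1 = A_{\rho,\theta_0}r_2 \iff A_{\rho,\theta_0}(r_1-r_2)=0 \iff r_1-r_2\in\cN(A_{\rho,\theta_0})$. Your caveat about the converse needing one of the two vectors to be anchored in the class (and about nonemptiness of the coset) is a fair point that the paper's one-line proof silently elides, but it does not change the substance of the argument.
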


Note that for tabular policies, the class $\cR^\beta_{\text{eq}}(\theta)$ \emph{essentially} reduces to the singleton $r^\beta_\theta$, the ``implicit reward'' of DPO, i.e., $r^\beta_\theta(s,a)=\beta \log\!\frac{\pi_{\theta}(a|s)}{\pi_{\theta_0}(a|s)}$.\footnote{More formally, $\cR^\beta_{\text{eq}}(\theta) $ contains functions of the form $r^\beta_\theta(s,a)+ \beta\log Z_\theta(s)$.} 

Interestingly, we can show that DPO's linearized reward functions $\overline r_\theta^\beta$ are in bijective correspondence with RLHF's equivalence classes, with each linearized reward function being the minimum-norm representative of its equivalence class:

\begin{restatable}[Relationship between RLHF equivalence classes and DPO linearization]{proposition}{imprwd}
\label{lem:imprwd}
    For a base policy $\pi_{\theta_0}$ and KL penalty $\beta > 0$,
    the reward vector $r \in \cR^\beta_{\text{eq}}(\theta)$ which has the minimum Mahalonobis-norm $\|r\|_{D_{\rho,\theta_0}}\bydef \sqrt{r^\top D_{\rho,\theta_0} r}$,
i.e., $\argmin_{r \in \Real^m} \|r\|_{D_{\rho,\theta_0}} \quad \text{such that} \quad  A_{\rho,\theta_0} r = \beta F_{\rho,\theta_0}(\theta-\theta_0)$, is given by $\overline r^\beta_\theta=\beta A_{\theta_0}^{\top}(\theta-\theta_0)$, the local linearization of $r^\beta_\theta(s,a)=\beta \log\!\frac{\pi_{\theta}(a|s)}{\pi_{\theta_0}(a|s)}$. 
\end{restatable}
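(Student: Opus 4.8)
The plan is to read the stated optimization as a weighted (Mahalanobis) minimum-norm problem over an affine subspace and to solve it by a Pythagorean decomposition. Abbreviate $A := A_{\theta_0}$, $D := D_{\rho,\theta_0}$, $F := F_{\rho,\theta_0}$, and $A_\rho := A_{\rho,\theta_0} = AD$, and assume (as is standard) that $\rho$ and $\pi_{\theta_0}$ have full support, so that $D$ is positive definite. The feasible set of the minimization is exactly the equivalence class $\cR^\beta_{\text{eq}}(\theta) = \{ r \in \Real^m : A_\rho r = \beta F(\theta - \theta_0)\}$ from \eqref{eq:DPO-reward-class}, which by Lemma~\ref{lem:equiv} is an affine subspace with direction space $\cN(A_\rho)$.

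The identity that drives the whole argument is
\[
 A D A^\top = \sum_{s,a} \rho(s)\,\pi_{\theta_0}(a\mid s)\, \nabla\log\pi_{\theta_0}(a\mid s)\,\nabla\log\pi_{\theta_0}(a\mid s)^\top = F_{\rho,\theta_0},
\]
which is immediate from the definitions of $A_{\theta_0}$, $D_{\rho,\theta_0}$ and the Fisher matrix. Using it, I would first verify that $\overline r^\beta_\theta = \beta A^\top(\theta-\theta_0)$ is feasible: $A_\rho \overline r^\beta_\theta = AD\,\beta A^\top(\theta-\theta_0) = \beta (ADA^\top)(\theta-\theta_0) = \beta F(\theta-\theta_0)$ (incidentally this also re-derives $\cR^\beta_{\text{eq}}(\theta)\neq\emptyset$). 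I would then show $\overline r^\beta_\theta$ is $D$-orthogonal to $\cN(A_\rho)$: for any $\delta$ with $A_\rho\delta = AD\delta = 0$,
\[
 \langle \overline r^\beta_\theta, \delta\rangle_D = \big(\beta A^\top(\theta-\theta_0)\big)^\top D\delta = \beta(\theta-\theta_0)^\top (AD\delta) = 0.
\]

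With these two facts the proof closes by the standard minimum-norm-on-an-affine-subspace argument: by Lemma~\ref{lem:equiv} any feasible $r$ is $\overline r^\beta_\theta + \delta$ with $\delta\in\cN(A_\rho)$, so $\|r\|_D^2 = \|\overline r^\beta_\theta\|_D^2 + 2\langle \overline r^\beta_\theta,\delta\rangle_D + \|\delta\|_D^2 = \|\overline r^\beta_\theta\|_D^2 + \|\delta\|_D^2 \ge \|\overline r^\beta_\theta\|_D^2$, with equality iff $\delta^\top D\delta = 0$, i.e.\ $\delta = 0$ since $D\succ 0$. Hence $\overline r^\beta_\theta$ is the unique minimizer, and it is precisely the local linearization of $r^\beta_\theta$ recorded just before Remark~\ref{rem:errorDPO}. (Equivalently, one can argue via KKT: stationarity of $\tfrac12 r^\top D r - \lambda^\top(A_\rho r - \beta F(\theta-\theta_0))$ gives $r = A^\top\lambda$; substituting into the constraint and using $ADA^\top = F$ leaves $F\lambda = \beta F(\theta-\theta_0)$, solved by $\lambda = \beta(\theta-\theta_0)$, whence $r = \overline r^\beta_\theta$; convexity upgrades this KKT point to the global minimum.)

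There is little genuine obstacle here: the only things to be careful about are (a) spotting the identity $A_{\theta_0}D_{\rho,\theta_0}A_{\theta_0}^\top = F_{\rho,\theta_0}$, and (b) keeping the geometry in the $D$-weighted inner product rather than the Euclidean one --- the fact that the weight $D$ in the objective norm is the same $D$ sitting inside $A_{\rho,\theta_0} = A_{\theta_0}D_{\rho,\theta_0}$ is exactly what makes $\overline r^\beta_\theta$ emerge as the minimum-norm representative of its class. The lone hypothesis worth flagging is positive-definiteness of $D_{\rho,\theta_0}$ (full support of $\rho$ and $\pi_{\theta_0}$), which is needed only for uniqueness; without it $\overline r^\beta_\theta$ still attains the minimal seminorm.
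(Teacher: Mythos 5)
Your proof is correct. The paper solves the same weighted minimum-norm problem by writing down the Lagrange-multiplier closed form $\overline r_\theta^\beta = \beta D^{-1}A_{\rho,\theta_0}^\top (A_{\rho,\theta_0} D^{-1}A_{\rho,\theta_0}^\top)^{-1} F_{\rho,\theta_0} (\theta-\theta_0)$ and then collapsing it using the identity $F_{\rho,\theta_0}= A_{\rho,\theta_0} D^{-1} A_{\rho,\theta_0}^\top$ (which, since $A_{\rho,\theta_0}=A_{\theta_0}D$, is exactly your $A_{\theta_0}DA_{\theta_0}^\top = F_{\rho,\theta_0}$); your parenthetical KKT remark is essentially that computation. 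Your primary route — verify feasibility, verify $D$-orthogonality of $\overline r^\beta_\theta$ to $\cN(A_{\rho,\theta_0})$, and conclude by the Pythagorean decomposition over the affine feasible set supplied by Lemma~\ref{lem:equiv} — is a genuinely different packaging of the same underlying fact, and it buys two things the paper's version does not make explicit: it avoids inverting $A_{\rho,\theta_0} D^{-1}A_{\rho,\theta_0}^\top$ (i.e.\ it does not implicitly assume $A_{\theta_0}$ has full row rank, only that the constraint is consistent, which your feasibility check establishes), and it isolates exactly where positive-definiteness of $D_{\rho,\theta_0}$ is used (uniqueness only). Both arguments pivot on the same key identification of the Fisher matrix as $A_{\theta_0}D_{\rho,\theta_0}A_{\theta_0}^\top$, so the conceptual content is shared.
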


The following result helps to justify the local linear approximations made to (i) the DPO implicit reward function manifold and (ii) the RLHF policy optimization objective, by showing that error between the local approximations and the original functions can be controlled to an arbitrarily prescribed level by taking the deviation parameter $\beta$ to be suitably large (so that DPO and RLHF essentially reduce to optimization over policies in a neighborhood of $\pi_{\theta_0}$).

\begin{restatable}[Approximation errors]{proposition}{error}
\label{prop:error}
    Fix $\epsilon > 0$. There exists a bounded neighborhood $\mathcal{E} \subset \Real^d$ containing $\theta_0$, a bounded set $\cR \subset \Real^m$, and $\beta_{\min} > 0$ such that 
    for every deviation parameter $\beta > \beta_{\min}$, we have (i) $r^\beta_\theta \in \cR$
    for each $\theta \in \mathcal{E}$, (ii) $r^\beta_\dpo = r^\beta_\theta$ for some $\theta \in \mathcal{E}$, 
    (iii) $r^\beta_\theta(s,a) - \beta \nabla \log \pi_{\theta_0}(a|s)^\top (\theta-\theta_0) \le \epsilon$ for each $\theta \in \mathcal{E}$, and (iv) $J(\theta; r^*)  - \left( \mathbb{E}_{\rho, \pi_{\theta_0}}\!\left[r^*(s,a)\right] + (\theta-\theta_0)^\top A_{\rho,\theta_0}r^* - \frac{\beta}{2}(\theta-\theta_0)^\top F_{\rho,\theta_0}(\theta-\theta_0) \right)  \le \epsilon$ for each $\theta \in \mathcal{E}$.
\end{restatable}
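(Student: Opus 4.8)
The plan is to treat the four claims essentially one at a time, exploiting that all the quantities involved are smooth functions of $\theta$ (and, after the right rescaling, of $\beta$) so that Taylor's theorem with a uniform remainder bound on a compact set does the work.

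First I would fix the scale of the neighborhood. Since $r^\beta_\theta(s,a) = \beta\log(\pi_\theta(a|s)/\pi_{\theta_0}(a|s))$, it is natural to parametrize deviations as $\theta = \theta_0 + \tfrac{1}{\beta}u$ for $u$ ranging over a fixed compact ball $\mathcal{B} \subset \mathbb{R}^d$ (chosen once, independent of $\beta$); then $\mathcal{E} := \mathcal{E}_\beta = \{\theta_0 + \tfrac{1}{\beta}u : u \in \mathcal{B}\}$ shrinks to $\theta_0$ as $\beta$ grows, and I would verify at the end that this $\mathcal{B}$ can be taken large enough to contain $\theta_{\mathrm{RLHF}}$ and $\theta_{\mathrm{DPO}}$ in rescaled coordinates — because \eqref{eq:eq:reward-policy-equiv-approx} shows the RLHF optimizer sits at $u^* = F^\dagger_{\rho,\theta_0}A_{\rho,\theta_0}r^*$, a fixed vector, and Proposition~\ref{prop:est} pins $r^\beta_{\theta_{\mathrm{DPO}}}$ to the projection of $r^*$ onto a $\beta$-independent linear manifold, giving a $\beta$-independent bound on the rescaled $u_{\mathrm{DPO}}$. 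That handles item (ii), and item (i) follows by taking $\cR$ to be any closed ball in $\mathbb{R}^m$ large enough to contain $\{r^\beta_\theta : \theta \in \mathcal{E}_\beta\}$: in rescaled coordinates $r^\beta_{\theta_0 + u/\beta}(s,a) = \beta\log(\pi_{\theta_0+u/\beta}(a|s)/\pi_{\theta_0}(a|s))$, which by a first-order expansion equals $\langle\nabla\log\pi_{\theta_0}(a|s),u\rangle + O(\|u\|^2/\beta)$, hence is uniformly bounded over $u \in \mathcal{B}$, $\beta > 1$.

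For item (iii), I would apply Taylor's theorem to the scalar map $g_{s,a}(t) := \log\pi_{\theta_0 + t(\theta-\theta_0)}(a|s)$ on $[0,1]$: we get $\log(\pi_\theta(a|s)/\pi_{\theta_0}(a|s)) = \nabla\log\pi_{\theta_0}(a|s)^\top(\theta-\theta_0) + \tfrac{1}{2}(\theta-\theta_0)^\top \nabla^2\log\pi_{\theta_0+\xi(\theta-\theta_0)}(a|s)(\theta-\theta_0)$ for some $\xi \in [0,1]$. Multiplying by $\beta$ and writing $\theta - \theta_0 = u/\beta$, the residual is $\tfrac{1}{2\beta}u^\top \nabla^2\log\pi_{(\cdot)}(a|s)\, u$; since $\theta \mapsto \nabla^2\log\pi_\theta(a|s)$ is continuous and $\mathcal{E}_\beta \subseteq \mathcal{E}_1$ is contained in a fixed compact set for $\beta \ge 1$, the Hessian is bounded by some $M$ there, so the residual is at most $M\|u\|^2/(2\beta) \le M\,\mathrm{diam}(\mathcal{B})^2/(2\beta)$, which is $\le \epsilon$ once $\beta > \beta_{\min}$. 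Item (iv) is the same argument one derivative higher: expand $\mathbb{E}_{\rho,\pi_\theta}[r^*(s,a)]$ to first order and $D_{\mathrm{KL}}(\pi_\theta\|\pi_{\theta_0})$ to second order around $\theta_0$, each with an integral (Lagrange) remainder; note that $\nabla D_{\mathrm{KL}}|_{\theta_0} = 0$ and $\nabla^2 D_{\mathrm{KL}}|_{\theta_0} = F_{\rho,\theta_0}$ (standard Fisher-information identities), so the leading terms reproduce exactly the displayed quadratic model, and the remainders are, respectively, $O(\|\theta-\theta_0\|^2) = O(\|u\|^2/\beta^2)$ for the reward term and $\beta \cdot O(\|\theta-\theta_0\|^3) = O(\|u\|^3/\beta^2)$ for the $\beta$-scaled KL term — both $\to 0$ uniformly over $u \in \mathcal{B}$, using continuity of the relevant second/third derivatives on the fixed compact set and finiteness of $\cS\times\cA$ and of $\mathrm{supp}(\rho)$ to pull sums outside.

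The main obstacle, such as it is, is bookkeeping the $\beta$-dependence consistently: the neighborhood $\mathcal{E}$ must shrink like $1/\beta$ for items (iii)--(iv) to hold, yet must stay large enough in rescaled coordinates to still contain $\theta_{\mathrm{RLHF}}$ and $\theta_{\mathrm{DPO}}$ for item (ii) — so the argument only closes because those two points also scale like $1/\beta$ away from $\theta_0$, which is exactly what \eqref{eq:eq:reward-policy-equiv-approx} and Proposition~\ref{prop:est} guarantee. One subtlety to state carefully: the proposition as phrased asks for a single $\mathcal{E}$, $\cR$, $\beta_{\min}$; I would read "bounded neighborhood" as permitting the $1/\beta$-scaled family (equivalently, present it as the fixed set $\mathcal{E}_1$ and note all bounds improve for $\beta > 1$), and remark that a strict reading with a single fixed $\mathcal{E}$ would instead force one to verify the quadratic/linear models are $\epsilon$-accurate on all of $\mathcal{E}_1$, which also follows from the same compactness-plus-continuity estimates once $\beta$ is large, at the cost of slightly larger constants. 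Either way no new ideas are needed beyond uniform Taylor remainders on compacta.
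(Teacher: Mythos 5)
Your proposal is correct and follows essentially the same route as the paper's proof: uniform Taylor (Lagrange) remainders with bounded second/third derivatives of $\log\pi_\theta$ on a compact set, combined with the observation that the relevant policy parameters satisfy $\|\theta-\theta_0\|=O(1/\beta)$, giving error rates $O(1/\beta)$ for (iii) and $O(1/\beta^2)$ for (iv); the paper obtains the $O(1/\beta)$ localization from boundedness of the union of DPO sublevel sets $\cR$ rather than your explicit rescaling $\theta=\theta_0+u/\beta$, but this is a bookkeeping difference, not a different argument. The tension you flag about whether $\mathcal{E}$ can be a single fixed set or must shrink with $\beta$ is present (and left implicit) in the paper's own proof as well, so raising it explicitly is a fair and accurate observation rather than a defect of your approach.
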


\subsection{The AuxDPO Algorithm}

We introduce a new direct alignment algorithm, AuxDPO, which leverages our insights from the analysis of the local geometry of DPO and RLHF policy optimization to mitigate the failure modes of DPO in a principled manner.

Recall that, in the general setting where the true reward function $r^*$ is misspecified (outside the implicit reward manifold $\cR^\beta$), then DPO finds the optimal policy $\theta^*$ only if the reverse-KL projection of $r^*$ on $\cR^\beta$ fortuitously lands on $r^\beta_{\theta^*}$ (Proposition \ref{prop:est}). This depends crucially on relative proportions of pairwise preference counts $\{n_{i,j}\}_{i,j}$ in the dataset and, as such, is beyond the learner's control. 

Instead, we take the following approach to encourage the optimization to move towards $r^\beta_{\theta^*}$. Note that by our local analysis of the RLHF optimization step (Sec \ref{sec:RLHFlocalview}), the true (misspecified) reward function $r^*$ and $r^\beta_{\theta^*}$ differ by an element of the nullspace of $A_{\rho,\theta_0}$, i.e., $r^*= r^\beta_{\theta^*} + \delta$, where $\delta \in \cN(A_{\rho,\theta_0}) \subsetneq \Real^m$. Therefore, if we allow the search in the reward space $\Real^m$ to utilize additional degrees of freedom $\delta$ along this nullspace (in addition to the usual degrees of freedom via $\theta$), then $r^*$ is no longer misspecified in this augmented representation. This should ideally result in the variables $\theta \in \Real^d$ and $\delta \in \cN(A_{\rho,\theta_0})$ settling in a manner that achieves $r^*= r^\beta_{\theta^*} + \delta^*$.

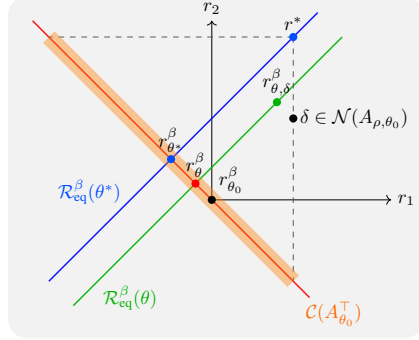
\begin{wrapfigure}[23]{r}{0.4\textwidth}
\vspace{-8pt}
    \resizebox{\linewidth}{!}{%
        \begin{tikzpicture}[scale=1.5]
  \tikzset{
    blueline/.style={thick, blue!70!blue, opacity=0.9},
    redline/.style={thick, red!70!red, opacity=0.9},
    brownline/.style={thick, brown!70!brown, opacity=0.9},
    greenline/.style={thick, green!70!black, opacity=0.9},
    yellowhighlight/.style={line width=3mm, yellow!80!orange, opacity=0.4},
    orangehighlight/.style={line width=3mm, orange!80!orange, opacity=0.4},
    greenhighlight/.style={line width=3mm, green!80!orange, opacity=0.4},
     redpoint/.style={fill=red!80!red, circle, inner sep=1.5pt},
    greenpoint/.style={fill=green!70!black, circle, inner sep=1.5pt},
    blackpoint/.style={fill=black!70!black, circle, inner sep=1.5pt},
    bluepoint/.style={fill=blue!70!cyan, circle, inner sep=1.5pt, text=blue!70!cyan},
    yellowpoint/.style={fill=yellow!80!orange, circle, inner sep=1.5pt},
    orangepoint/.style={fill=orange!80!red, circle, inner sep=1.5pt},
    bluestar/.style={star, star points=5, star point ratio=2.5, fill=blue!70!cyan, inner sep=0.5pt, minimum size=10pt},
    yellowstar/.style={star, star points=5, star point ratio=2.5, fill=yellow!80!orange, inner sep=0.5pt, minimum size=10pt},
  }


  \begin{scope}
  \definecolor{darkgreen}{rgb}{0,0.5,0}
  \fill[gray!10, rounded corners=10pt] (-2.5,-1.7) rectangle (2.6,2.5);
  \draw[->] (0,0) -- (2.2,0) node[right] {$r_1$};
  \draw[->] (0,0) -- (0,2.2) node[above] {$r_2$};
  
  \draw[redline, name path=spanplane] (-2.2,2.2) -- (1.2,-1.2);

  \draw[blueline, name path=rewardplane] (-2,-1) -- (1.3,2.3);

  \draw[greenline, name path=rewardplane] (-1.7,-1.3) -- (1.6,2);

    \draw[dashed, gray!70!black] (1,2) -- (-2,2);
    \draw[dashed, gray!70!black] (1,2) -- (1,-1);

  \draw[orangehighlight, name path=thetaplane] (-2,2) -- (1,-1);

  \node[redpoint] (rtheta) at (-0.2,0.2) {};
 
  \node[bluepoint] (rstar) at (1,2) {};
  \node[greenpoint] (rthetadelta) at (0.8,1.2) {};
  \node[blackpoint] (delta) at (1,1) {};

  \node[bluepoint] (rthetastar) at (-0.5,0.5) {};

  \node[blackpoint] (rtheta0) at (0,0) {};
  
  \node[anchor=south] at (rstar) {$r^*$};
  \node[anchor=south] at (rtheta) {$ r^\beta_\theta$};
  \node[anchor=south] at (rthetadelta) {$ r^\beta_{\theta,\delta}$};
  \node[anchor=west] at (delta) {$\delta \in \mathcal{N}(A_{\rho,\theta_0})$};
  \node[anchor=south] at (rthetastar) {$ r^\beta_{\theta^*}$};

  \node[above right=0.25pt] at (rtheta0) {$r^\beta_{\theta_0}$};

  \node[blue!70!cyan, align=center] at (-1.5,0.1) {$\cR^\beta_{\text{eq}}(\theta^*)$};

  \node[green!70!black, align=center] at (-1,-1.2) {$\cR^\beta_{\text{eq}}(\theta)$};

  \node[orange!80!red, align=left] at (1.5,-1.4) {$ \cC(A_{\theta_0}^{\top})$};
  \end{scope}
  \end{tikzpicture}
}
\caption{\footnotesize{AuxDPO fixes DPO's misspecification. $r^*$ is the latent reward. The \textcolor{blue}{blue} line denotes the equivalence class $\cR^\beta_{\text{eq}}(\theta^*)$ of all reward functions that yield the RLHF-optimal policy $\pi_{\theta^*}$. The \textcolor{red}{red} line denotes the linear approximation $\cC(A_{\theta_0}^\top)$ of the implicit reward manifold $\cR^\beta$. The region shaded in \textcolor{orange}{orange} represents all possible implicit reward functions that DPO can possibly project onto. 
The {\color{darkgreen}green} line depicts the domain of optimization over AuxDPO's auxiliary variables $\delta \in \cN(A_{\rho,\theta_0})$ for a fixed $\theta$ (the line shifts in parallel for other $\theta$). $\delta$ introduces additional degrees of freedom, which help push the KL projection of $r^*$ to lie in the equivalence class $\cR_{\theta^*}$. The projection induces the optimal policy $\pi_{\theta^*}$.}}
\end{wrapfigure}

Observe that since $A_{\rho,\theta_0} = A_{\theta_0} D_{\rho,\theta_0}$, both $\cN(A_{\theta_0})$ and $\cN(A_{\rho,\theta_0})$ have the same dimension. Moreover, $\cC(A^\top_{\theta_0})$ and $\cC(A_{\theta_0})$ also have same dimension. Thus, from rank-nullity theorem, by varying both $\theta$ and $\delta$, we can search over the entire space of the rewards $\Real^m$, contrary to DPO which searches only over $\cC(A^\top_{\theta_0})$ (under linear approximation), a manifold in $\Real^m$ with dimension at most $d$.

To this end, we introduce auxiliary variables $\delta \in \Real^m$ into the population loss of DPO~\eqref{eq:DPO-population}, and minimize it jointly over $\theta \in \Real^d, \delta$ while enforcing the nullspace constraint $\delta \in \cN(A_{\rho,\theta_0})$. This gives us the AuxDPO procedure: 
\begin{align}\label{eq:auxdpo-population}
&\minimize_{\theta \in \Real^d, \delta \in \cN(A_{\rho,\theta_0})} \cL(\theta,\delta),\, \quad \text{where}\nonumber\\
   \!\!\!\cL(\theta,\delta)  &= -\!\!\sum_{s,a,a'}\! n_{s,a,a'} \!\Big[ p_{s,a,a'}^{\BTL}(r^*) \log p_{s,a,a'}^{\BTL}(r^\beta_{\theta,\delta}) \\ &+ (1 \!-\! p_{s,a,a'}^{\BTL}(r^*)\log \big(1\!-\!p_{s,a,a'}^{\BTL}(r^\beta_{\theta,\delta})\big) \Big]\nonumber.
\end{align}
\begin{restatable}[Auxiliary variables bypass misspecification]{proposition}{estaux}
\label{prop:est-aux}
Let the hypothesis of Proposition~\ref{prop:est} hold. Fix a tolerance $\epsilon > 0$. Then, for sufficiently large $\beta > 0$, the optimization \eqref{eq:auxdpo-population} is minimized at $\theta = \theta^*$ up to error $O(\epsilon)$.
\end{restatable}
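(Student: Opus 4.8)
The plan is to show that, in AuxDPO's augmented coordinates $(\theta,\delta)$, the true reward $r^*$ is no longer misspecified, so the reverse-KL projection at the heart of the loss becomes trivial and returns $r^*$ itself; reading off the $\theta$-component of the realization then identifies it with $\theta^*$. First I would observe that the algebra behind Proposition~\ref{prop:est} applies verbatim to \eqref{eq:auxdpo-population}: writing $r^\beta_{\theta,\delta} := r^\beta_\theta + \delta$, the AuxDPO population loss $\cL(\theta,\delta)$ equals, up to an additive constant depending only on $r^*$, $\sum_{s,a,a'} n_{s,a,a'}\, d_{\KL}\big(p^{\BTL}_{s,a,a'}(r^*)\,\|\,p^{\BTL}_{s,a,a'}(r^\beta_{\theta,\delta})\big)$. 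Hence any minimizer $(\theta_{\mathrm{AuxDPO}},\delta_{\mathrm{AuxDPO}})$ makes $r^\beta_{\theta_{\mathrm{AuxDPO}},\delta_{\mathrm{AuxDPO}}}$ the $n$-weighted reverse-KL projection of $r^*$ onto the \emph{augmented} manifold $\mathcal{M}^\beta := \{\,r^\beta_\theta + \delta : \theta\in\Real^d,\ \delta\in\cN(A_{\rho,\theta_0})\,\}$. So it suffices to establish (i) $r^*$ lies within $O(\epsilon)$ of $\mathcal{M}^\beta$ for $\beta$ large, so the projection returns something $O(\epsilon)$-close to $r^*$, and (ii) the realization of $r^*$ in these coordinates has $\theta=\theta^*$.

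\textbf{Realizability via linearization.} By Proposition~\ref{prop:error}(ii)--(iii), for $\beta$ large the AuxDPO solution's $\theta$ stays in the bounded neighborhood $\mathcal{E}$ and on $\mathcal{E}$ the manifold $\mathcal{M}^\beta$ is within $\epsilon$ (sup-norm) of its linearization $\overline{\mathcal{M}}^\beta := \cC(A_{\theta_0}^\top) + \cN(A_{\rho,\theta_0})$, so it is enough that $\overline{\mathcal{M}}^\beta = \Real^m$. Since $D_{\rho,\theta_0}\succ 0$ we have $\operatorname{rank}A_{\rho,\theta_0} = \operatorname{rank}A_{\theta_0}$, so $\dim\cC(A_{\theta_0}^\top) + \dim\cN(A_{\rho,\theta_0}) = m$; and the intersection is trivial, since $r = A_{\theta_0}^\top x\in\cN(A_{\rho,\theta_0})$ forces $F_{\rho,\theta_0}x = A_{\theta_0}D_{\rho,\theta_0}A_{\theta_0}^\top x = 0$, hence $\|D_{\rho,\theta_0}^{1/2}A_{\theta_0}^\top x\|^2 = 0$ and $r = 0$. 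Thus $\overline{\mathcal{M}}^\beta = \cC(A_{\theta_0}^\top)\oplus\cN(A_{\rho,\theta_0}) = \Real^m$, and $r^*$ admits a \emph{unique} decomposition $r^* = \beta A_{\theta_0}^\top(\theta_\star-\theta_0) + \delta_\star$ with $\delta_\star\in\cN(A_{\rho,\theta_0})$ and $\theta_\star-\theta_0 = O(1/\beta)\in\mathcal{E}$. Moreover the BTL gauge of $r^*$ (per-prompt constant shifts $\mathbf{1}_s$) lies entirely in $\cN(A_{\rho,\theta_0})$, since $A_{\rho,\theta_0}\mathbf{1}_s = \rho(s)\nabla\sum_a\pi_{\theta_0}(a\mid s) = 0$, so it is absorbed by $\delta$ and the $\cC(A_{\theta_0}^\top)$-component $\beta A_{\theta_0}^\top(\theta_\star-\theta_0)$ is pinned down by $r^*$.

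\textbf{Identifying $\theta_\star = \theta^*$ and concluding.} Applying $A_{\rho,\theta_0}$ to the decomposition of $r^*$, and using $A_{\rho,\theta_0}\delta_\star = 0$ together with $A_{\rho,\theta_0}A_{\theta_0}^\top = A_{\theta_0}D_{\rho,\theta_0}A_{\theta_0}^\top = F_{\rho,\theta_0}$, gives $A_{\rho,\theta_0}r^* = \beta F_{\rho,\theta_0}(\theta_\star-\theta_0)$; that is, $r^*\in\cR^\beta_{\mathrm{eq}}(\theta_\star)$ and $\beta A_{\theta_0}^\top(\theta_\star-\theta_0)$ is its minimum-$D_{\rho,\theta_0}$-norm representative (Proposition~\ref{lem:imprwd}). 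But the linearized RLHF stationarity relation \eqref{eq:eq:reward-policy-equiv-approx} reads $A_{\rho,\theta_0}r^* = \beta F_{\rho,\theta_0}(\theta^*-\theta_0)$; since $\theta^*$ is the unique RLHF optimum (equivalently $F_{\rho,\theta_0}\succ 0$), this forces $\theta_\star = \theta^*$. Chaining the $O(\epsilon)$ linearization errors of Proposition~\ref{prop:error}(iii)--(iv) (for both the implicit-reward manifold and the RLHF objective) with the projection-stability estimate below then yields $\theta_{\mathrm{AuxDPO}} = \theta^* + O(\epsilon)$, as claimed.

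\textbf{Main obstacle.} I expect the delicate step to be the quantitative stability argument that turns ``$r^*$ is $O(\epsilon)$-close to $\mathcal{M}^\beta$'' into ``the minimizer of \eqref{eq:auxdpo-population} is $O(\epsilon)$-close to $(\theta^*,\delta_\star)$.'' This requires the reverse-KL projection onto $\mathcal{M}^\beta$ to be Lipschitz in the position of the manifold, which in turn needs a nondegeneracy fact: each Bernoulli KL $d_\KL(p^{\BTL}_{s,a,a'}(r^*)\|\,\cdot\,)$ is smooth and locally strongly convex, and $F_{\rho,\theta_0}\succ 0$ guarantees the induced objective has a positive-definite Hessian transverse to the BTL gauge along $\overline{\mathcal{M}}^\beta$; one then invokes standard sensitivity analysis of smooth, locally strongly convex programs, being careful that the constants are uniform over the bounded sets $\mathcal{E}$ and $\cR$ supplied by Proposition~\ref{prop:error} (and that enough $n_{s,a,a'}$ are positive to identify $r^*$ up to the BTL gauge). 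Everything else is the linear algebra above plus the already-established local-geometry results (Lemma~\ref{lem:equiv}, Propositions~\ref{lem:imprwd} and~\ref{prop:error}).
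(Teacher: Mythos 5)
Your proposal is correct and follows essentially the same route as the paper's proof: reduce the AuxDPO population loss to a reverse-KL projection onto the augmented reward set $\{r^\beta_\theta+\delta\}$, observe that its linearization $\cC(A_{\theta_0}^\top)+\cN(A_{\rho,\theta_0})$ is all of $\Real^m$ so $r^*$ becomes (approximately) realizable, and identify the $\theta$-component of the realization with $\theta^*$ via the local stationarity relation \eqref{eq:eq:reward-policy-equiv-approx} together with Proposition~\ref{prop:error}. In fact your write-up is more careful than the paper's own: you prove the trivial-intersection half of the direct-sum claim (the paper invokes only rank--nullity), make the identification $\theta_\star=\theta^*$ explicit, note that the per-prompt BTL gauge is absorbed into $\cN(A_{\rho,\theta_0})$, and correctly flag the quantitative projection-stability step, all of which the paper asserts or leaves implicit.
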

Having set up a sound AuxDPO population loss, we now develop its corresponding empirical loss version, which can be implemented with a finite preference dataset. 
We convert the constrained optimization over the set $\cN(A_{\rho,\theta_0}) \subset \Real^m$ to an unconstrained one over $\Real^m$ by adding a penalty term $\norm{\cN(A_{\rho,\theta_0})\delta}_2^2$ to the log-loss. Note that $A_{\rho,\theta_0} \delta = \mathbb{E}_{\rho,\pi_{\theta_0}}\left[\delta(s,a)\nabla \log \pi_{\theta_0}(a|s)\right]$.
Hence, we can approximate it with a given dataset $\cD=(s^{(i)},a_w^{(i)},a_l^{(i)})_{i=1}^{n}$ in Monte-Carlo fashion. 
This leads to the empirical AuxDPO loss $\cL_\cD$ over variables $\theta \in \Real^d$ and $\delta \in \Real^{2n}$:
\begin{align*}
    \cL_\cD(\theta,\delta) &= -\frac{1}{n}\sum\nolimits_{i=1}^n \log\sigma\left(r^\beta_\theta(s^{(i)},a_w^{(i)})-r^\beta_\theta(s^{(i)},a_l^{(i)})+\delta(s^{(i)}, a_w^{(i)}) - \delta(s^{(i)}, a_l^{(i)}) \right)\\
    &\!\!\!\!+ \lambda \norm{\frac{1}{2n}\sum\nolimits_{i=1}^n \left(\delta(s^{(i)}, a_w^{(i)})\nabla \log\pi_{\theta_0}(a_w^{(i)} \mid s^{(i)}) + \delta(s^{(i)}, a_l^{(i)})\nabla \log\pi_{\theta_0}(a_l^{(i)} \mid s^{(i)})\right)}^2.
\end{align*}
where $\delta = \left\lbrace \delta(s^{(i)}, a_w^{(i)}), \delta(s^{(i)}, a_l^{(i)})\right\rbrace_{i=1}^n  \in \Real^{2n}$ denotes the vector of auxiliary variables (typically $2n \ll m$) and $\lambda > 0$ is a hyper-parameter that is responsible for enforcing the nullspace constraint on $\delta$.
Note that the total number of trainable parameters is $d+2n = O(d)$, since typically, $n \ll d$.

\section{Experiments}
\label{sec:numerical}

\textbf{Datasets.} 
We conduct evaluations on two benchmark datasets: \textsc{RewardBench v2} and \textsc{MMLU-Pro}. \textsc{RewardBench v2} \citep{malik2025rewardbench}  contains $1.87K$ prompts covering categories like factuality, precise instruction following, and focus, with each prompt containing a chosen and a rejected response. \textsc{MMLU-Pro} \citep{wang2024mmlu} is a multi-task understanding dataset containing  $12K$ complex questions across various disciplines (math, law, chemistry, etc.). Each question has $10$ possible answers and a correct answer. We use \textsc{UltraFeedback} \citep{cui2024ultrafeedbackboostinglanguagemodels} as our training dataset. Specifically, we use the pre-processed and binarized version of \textsc{UltraFeedback} as presented by \citet{dong2024rlhfworkflowrewardmodeling}, which has been shown to generate higher quality reward models \citep{ ArmoRM, xiong2024iterative, banerjee2024reliablealignmentuncertaintyawarerlhf}.

\begin{table}[t]
\centering
\begin{tabular}{@{}l l l c c c c@{}}
\toprule
Model & Dataset & Method & DPO & AuxDPO & IPO & DPOP \\
\midrule
\texttt{Llama3.1-8B} & \textsc{MMLU-Pro}          & ID  & $57.14$ & $\mathbf{63.26}$ & $59.18$ & $\underline{61.22}$ \\
                     & \textsc{MMLU-Pro}          & OOD & $8.16$  & $\mathbf{14.28}$ & $\underline{10.20}$ & $6.12$ \\
                     & \textsc{RewardBench v2}    & ID  & $56.01$ & $\mathbf{66.72}$ & $61.34$ & $\underline{62.27}$ \\
                     & \textsc{RewardBench v2}    & OOD & $14.31$ & $\mathbf{32.44}$ & $\underline{20.17}$ & $19.87$ \\
\midrule
\texttt{Llama3.2-1B} & \textsc{MMLU-Pro}          & ID  & $39.58$ & $\mathbf{45.83}$ & $43.75$ & $\underline{44.21}$ \\
                     & \textsc{MMLU-Pro}          & OOD & $6.25$  & $\underline{12.52}$ & $\mathbf{14.58}$ & $4.16$ \\
                     & \textsc{RewardBench v2}    & ID  & $\underline{77.21}$ & $\mathbf{86.37}$ & $69.72$ & $71.21$ \\
                     & \textsc{RewardBench v2}    & OOD & $14.11$ & $\mathbf{43.27}$ & $\underline{20.42}$ & $18.76$ \\
\midrule
\texttt{Qwen3-0.6B}  & \textsc{MMLU-Pro}          & ID  & $53.12$ & $\mathbf{61.78}$ & $47.48$ & $\underline{56.67}$ \\
                     & \textsc{MMLU-Pro}          & OOD & $11.34$ & $\mathbf{22.22}$ & $15.56$ & $\underline{17.78}$ \\
                     & \textsc{RewardBench v2}    & ID  & $\underline{55.10}$ & $\mathbf{65.31}$ & $53.06$ & $51.02$ \\
                     & \textsc{RewardBench v2}    & OOD & $\textcolor{red}{-8.16}$ & $\mathbf{18.36}$ & $\textcolor{red}{-8.23}$ & $\underline{\textcolor{red}{-6.25}}$ \\
\bottomrule
\end{tabular}
\caption{\footnotesize Algorithm comparison. Values show percentage change in mean accuracy relative to the base policy, across \textsc{MMLU-Pro} and \textsc{RewardBench v2} under in-domain (ID) and out-of-domain (OOD) settings. Best gains are in \textbf{bold}, second-best are \underline{underlined}. Accuracies which degrade from the base policy are marked in \textcolor{red}{red}.}
\label{tab:algo_full}
\end{table}

\begin{table}[h!]
\centering
\renewcommand{\arraystretch}{1.2}
\begin{tabular}{l c ccc ccc ccc ccc}
\hline
\textbf{Subject} 
& \textbf{Base} 
& \multicolumn{2}{c}{\textbf{DPO}} 
& \multicolumn{2}{c}{\textbf{AuxDPO}} 
& \multicolumn{2}{c}{\textbf{IPO}} 
& \multicolumn{2}{c}{\textbf{DPOP}} \\
\cline{3-10}
& 
& OOD & ID 
& OOD & ID 
& OOD & ID 
& OOD & ID \\
\hline
Overall & $25.37$ & $27.06$ & $46.60$ & $\mathbf{39.26}$ & $\mathbf{51.95}$ & $31.73$ & $47.17$ & $\underline{32.64}$ & $\underline{48.25}$ \\
\midrule
Biology & $55.93$ & $\textcolor{red}{52.44}$ & $76.50$ & $\mathbf{75.07}$ & $\mathbf{86.87}$ & $\underline{60.67}$ & $79.12$ & $60.39$ & $\underline{81.50}$ \\
Business & $13.18$ & $20.03$ & $35.67$ & $\mathbf{29.96}$ & $\mathbf{43.49}$ & $\underline{24.21}$ & $32.56$ & $24.08$ & $\underline{37.96}$ \\
Chemistry & $10.42$ & $16.25$ & $\underline{37.12}$ & $\mathbf{26.78}$ & $\mathbf{40.86}$ & $21.64$ & $34.56$ & $\underline{21.73}$ & $35.86$ \\
Comp. Sc. & $24.39$ & $26.59$ & $42.78$ & $\mathbf{38.93}$ & $\mathbf{48.35}$ & $\underline{31.46}$ & $43.57$ & $31.22$ & $\underline{47.08}$ \\
Economics & $38.98$ & $\textcolor{red}{37.80}$ & $59.24$ & $\mathbf{55.12}$ & $\mathbf{68.30}$ & $\underline{44.55}$ & $62.58$ & $43.60$ & $\underline{64.50}$ \\
Engineering & $10.22$ & $17.75$ & $\underline{46.06}$ & $\mathbf{32.18}$ & $\mathbf{48.60}$ & $26.01$ & $41.46$ & $\underline{28.48}$ & $43.82$ \\
Health & $41.81$ & $\textcolor{red}{41.32}$ & $62.41$ & $\mathbf{55.67}$ & $\mathbf{68.58}$ & $\underline{44.99}$ & $64.08$  & $44.13$ & $\underline{66.23}$ \\
History & $38.06$ & $\textcolor{red}{32.55}$ & $54.96$ & $\mathbf{47.41}$ & $\mathbf{64.58}$ & $38.32$ & $\underline{59.56}$ & $\underline{40.42}$ &  $57.29$\\
Law & $27.25$ & $\textcolor{red}{23.61}$ & $42.20$ & $\mathbf{34.50}$ & $\mathbf{48.01}$ & $27.88$ & $\underline{45.25}$ & $\underline{30.43}$ & $42.31$ \\
Math & $11.77$ & $16.51$ & $28.97$ & $\mathbf{22.26}$ & $\mathbf{30.80}$ & $17.99$ & $29.34$ & $\underline{20.28}$ & $\underline{30.24}$ \\
\hline
\end{tabular}
\caption{\footnotesize Per-subject accuracies (top 10 subjects alphabetically) and overall win-rates across baseline (Llama3.1-8B) and preference optimization methods. For each method, two settings are shown: \textbf{OOD} (cross-domain transfer) and \textbf{ID} (in-domain learning) with reported results. In each row, the best accuracy is shown in \textbf{bold}, and the second-best is \underline{underlined}. Accuracies which degrade from the base policy are marked in \textcolor{red}{red}.}
\label{tab:subject_accuracy_top10}
\end{table}

\textbf{Evaluation and Methodology.} We compare AuxDPO with DPO, IPO~\citep{azar2023general}, and DPOP~\citep{pal2024smaug}.
Table~\ref{tab:algo_full} reports accuracies in terms of whether the logits of the chosen response were higher than those of the rejected response. While \textsc{RewardBench v2} provides chosen and rejected responses directly, we make \textsc{MMLU-Pro} into a preference dataset by filtering the correct answer as the chosen response and any incorrect response as the rejected response.  We consider both in-distribution (ID) and out-of-distribution (OOD) evaluation settings. In the ID setup, each dataset is split 80/20 into training and evaluation subsets, ensuring IID comparisons. In the OOD setup, models are trained on cleaned \textsc{Ultrafeedback} and evaluated on the preference datasets. We report full finetuning results on the models. Table~\ref{tab:subject_accuracy_top10} presents overall and subject-wise accuracies on \textsc{MMLU-Pro}. Accuracy is measured by comparing the finetuned model's generated answer with the correct answer provided in the dataset. We compare all methods on \texttt{Llama3.1-8B} under both OOD and ID settings. 
From both tables, we see that AuxDPO outperforms other finetuning methods across all three models. Ablation studies, implementation, and dataset details are presented in Appendix~\ref{sec:impl-auxdpo}.

\pagebreak
\bibliography{main}
\bibliographystyle{plainnat}
\newpage
\appendix

\section{Missing Proofs}
In this section, we restate our theoretical results and provide their proofs.

\est*
\begin{proof}
First note that minimizing $\cL(\theta)$ is equivalent to minimizing the expected negative log-likelihood, i.e.,
\begin{align*}
\min_{\theta \in \mathbb{R}^d} -\sum_{s,a,a'} n_{s,a,a'} \left[ p_{s,a,a'}^{\BTL}(r^*) \log p_{s,a,a'}^{\BTL}(r_\theta) + (1 - p_{s,a,a'}^{\BTL}(r^*)\log \big(1-p_{s,a,a'}^{\BTL}(r_\theta)\big) \right].
\end{align*}
Note that
\begin{align*}
&p_{s,a,a'}^{\BTL}(r^*) \log p_{s,a,a'}^{\BTL}(r_\theta)   + (1 - p_{s,a,a'}^{\BTL}(r^*)) \log \big(1-p_{s,a,a'}^{\BTL}(r_\theta) \big)\\
&= - d_{\mathrm{KL}}\left(p_{s,a,a'}^{\BTL}(r^*)  || p_{s,a,a'}^{\BTL}(r_\theta)\right) + H(p_{s,a,a'}^{\BTL}(r^*)),  
\end{align*}
where $d_\KL(p||q)$ denotes the KL divergence b/w two Bernoulli random variables with parameters $p,q$ and $H(p)$ denotes the entropy of a Bernoulli random variable with parameter $p$. Since $H(p_{s,a,a'}^{\BTL}(r^*))$ is constant w.r.t.\ $\theta$, the loss minimization is equivalent to minimizing the reverse KL divergence:
\[
\min_{\theta \in \mathbb{R}^d} \sum_{s,a,a'} n_{s,a,a'} \cdot d_\KL\left(p_{s,a,a'}^{\BTL}(r^*) || p_{s,a,a'}^{\BTL}(r_\theta)\right).
\]
Since $\cR = \lbrace r_\theta: \theta \in \Real^d \rbrace$, this is equivalent to solving
\[
\min_{r \in \cR} \sum_{s,a,a'} n_{s,a,a'} \cdot d_{\mathrm{KL}}\left(p_{s,a,a'}^{\BTL}(r^*) || p_{s,a,a'}^{\BTL}(r)\right)~,
\]
which completes the proof.
\end{proof}

\equivclass*
\begin{proof}
    The result holds since
    \begin{align*}
      A_{\rho,\theta_0}r_1 = A_{\rho,\theta_0}r_2 \iff A_{\rho,\theta_0}(r_1 - r_2)=0 \iff  r_1 -r_2 = \delta\in \cN( A_{\rho,\theta_0}).  
    \end{align*}
    which completes the proof.
\end{proof}

\imprwd*
\begin{proof}
To this end, a representative $\overline r_\theta$ can be the reward vector $r \in \cR_\theta$ which has the minimum Mahalonobis-norm $\|r\|_{D_{\rho,\theta_0}}$,
i.e.,
\begin{align}\label{eq:reward-mahalonobis}
   &\overline r_\theta^\beta=\argmin_{r \in \Real^m} \|r\|_{D_{\rho,\theta_0}} \quad \text{s.t.} \quad  A_{\rho,\theta_0} r = \beta F_{\rho,\theta_0}(\theta-\theta_0)~,
\end{align}
where $D_{\rho,\theta_0}$ is an $m\times m$ diagonal matrix ($m=|\cS|\cdot |\cA|$), whose diagonal entries are indexed by  $\lbrace \rho(s)\pi_{\theta_0}(a|s)\rbrace_{s,a}$.  

Using the standard Lagrange multiplier technique, one can solve \eqref{eq:reward-mahalonobis} to get
\begin{align}\label{eq:implicit-reward-vec}
    \overline r_\theta^\beta =  \beta D_{\rho,\theta_0}^{-1}A_{\rho,\theta_0}^\top (A_{\rho,\theta_0} D_{\rho,\theta_0}^{-1}A_{\rho,\theta_0}^\top)^{-1} F_{\rho,\theta_0} (\theta-\theta_0)= \beta D_{\rho,\theta_0}^{-1} A_{\rho,\theta_0}^\top  (\theta - \theta_0)~,
\end{align}
where the last equality is because the Fisher information matrix satisfies
\begin{align*}
 F_{\rho,\theta}&= \mathbb{E}_{s\sim \rho(\cdot),a \sim\pi_\theta(\cdot |s)}\left[\nabla \log\pi_{\theta}(a|s) \nabla \log \pi_\theta(a|s)^\top\right]\\
 &= \sum_{s,a} \frac{\rho(s)}{\pi_\theta(a|s)}\nabla \pi_\theta(a|s) \nabla \pi_\theta(a|s)^\top= A_{\rho,\theta} D_{\rho,\theta}^{-1} A_{\rho,\theta}^\top~.
\end{align*}
This completes the proof.
\end{proof}

\error*
\begin{proof}
Define $f: \Real^m \to \Real$ by $f(r):= \sum_{s, a,a'} n_{s,a,a'} \, d_{\mathrm{KL}}\big(p^{\BTL}_{s,a,a'}(r^*) || \,p^{\BTL}_{s,a,a'}(r)\big)$. Let  
$\cR := \cup_{\beta > 0} \left\{ r \in \cR^\beta: f(r) \leq f(0) \right\}$. It follows from \eqref{eq:reverse-kl-project} that for each $\beta$, $r^\beta_\dpo \in \cR$, establishing \textbf{Part (ii)}. Furthermore, $\cR$ is a bounded set since each of the sets in the union is bounded. Let $\gamma$ be a bound on the $\ell_2$-norm of any reward vector in $\cR$ (note that $\gamma$ is independent of any deviation parameter $\beta > 0$). 

\textbf{Part (i):} The function $\theta \mapsto r^\beta_\theta$ is assumed to be smooth. Also, $r^\beta_{\theta_0} = 0$. Since $\cR$ is bounded, there exists a neighborhood $\mathcal{E}$ of $\theta_0$ such that $r^\beta_\theta \in \cR$ for all $\theta \in \mathcal{E}$. 

\textbf{Part (iii):} For $\theta \in \cE$, using first-order Taylor series approximation of $r^\beta_\theta(s,a)$ around $\theta_0$, we get the approximation error
\begin{align*}
e_r(\theta):=r^\beta_\theta(s,a) - \beta \nabla \log \pi_{\theta_0}(a|s)^\top (\theta-\theta_0) = \frac{\beta}{2} (\theta-\theta_0)^\top \nabla^2 \log \pi_{\bar\theta}(a|s)(\theta-\theta_0)   
\end{align*}
for some $\bar\theta$ in the line segment joining $\theta$ and $\theta_0$.

Suppose the log-likelihood function $\log \pi_\theta(a|s)$ is 
$L$-smooth in $\theta$ for every $(s,a)$, i.e.
\[
\|\nabla^2 \log \pi_\theta(a|s)\|_{\mathrm{op}} \le L
\quad \text{for all } (s,a),\theta,
\]
where $\|\cdot\|_{\mathrm{op}}$ denotes the spectral (operator) norm. 
Then every eigenvalue of the Hessian $H_\theta(s,a) := \nabla^2 \log \pi_\theta(s,a)$
lies in the interval $[-L,L]$. In particular,
$\lambda_{\max}\!\big(H_\theta(s,a)\big) \;\le\; L$. Using this, the error can be upper-bounded as
\begin{align*}
  e_r(\theta) \le \frac{\beta L}{2}\norm{\theta-\theta_0}^2  
\end{align*}
Let $s_\theta(s,a)=\nabla_\theta\log\pi_\theta(a|s)$ be the score function. Assume $\|s_\theta(s,a)\|\le G$. Then we have
\begin{align*}
    r^\beta_\theta(s,a) \le \beta G \norm{\theta-\theta_0} + \frac{\beta L}{2}\norm{\theta-\theta_0}^2 = O(\beta \norm{\theta-\theta_0})~. 
\end{align*}
Furthermore, since $\norm{r_\theta^\beta} \le \gamma = O(1)$ for $\theta \in \cE$, it ensures that
$\norm{\theta-\theta_0} = O(1/\beta)$. This further yields the approximation error
\begin{align*}
    e_r(\theta) \le O(1/\beta) \le \epsilon~,
\end{align*}
for $\beta > \beta_{\min}$ (which depends on $\epsilon$ and constants $L,G$).

\textbf{Part (iv):} For $\theta \in \cE$, using first order Taylor series approximation of $r^\beta_\theta(s,a)$ around $\theta_0$, we get the approximation error for the expected reward
\begin{align*}
e_{\rho,r}(\theta):=\mathbb{E}_{\rho,\pi_\theta}[r^*(s,a)] - r^{*\top}\pi_{\theta_0} + (\theta-\theta_0)^\top A_{\rho,\theta_0}r^* = \frac{1}{2}(\theta-\theta_0)^\top H_{\rho,\overline\theta}^{(r)}(\theta-\theta_0)~,   \end{align*}
for some $\overline\theta$ between $\theta_0$ and $\theta$. Here
\[
H_{\rho,\overline\theta}^{(r)}
=\sum_{s,a}\rho(s)r^*(s,a)\pi_\theta(a|s)\big[
s_\theta(s,a)s_\theta(s,a)^\top + H_\theta(s,a)\big],
\]
where $s_\theta(s,a)=\nabla_\theta\log\pi_\theta(a|s)$ and $H_\theta(s,a)=\nabla_\theta^2\log\pi_\theta(a|s)$.
Assume $\|s_\theta(s,a)\|\le G$, $\|H_\theta(s,a)\|_{\mathrm{op}}\le L$, $|r^*(s,a)|\le R_{\max}$.
Then
$\|H_{\rho,\overline\theta}^{(r)}\|_{\mathrm{op}}
\le R_{\max}(G^2+L):= M_1$
yielding
\[
|e_{\rho,r}(\theta)|\le \frac{M_1}{2}\|\theta-\theta_0\|^2.
\]
Similarly, using second order Taylor series approximation of $\mathbb{E}_{\rho}
\big[D_{\mathrm{KL}}(\pi_\theta(\cdot|s)\|\pi_{\theta_0}(\cdot|s))\big]$ around $\theta_0$, we get the approximation error 
\begin{align*}
    e_{\rho,\text{kl}}(\theta) &:= \mathbb{E}_{\rho}
\big[D_{\mathrm{KL}}(\pi_\theta(\cdot|s)\|\pi_{\theta_0}(\cdot|s))\big] - \frac{1}{2}(\theta-\theta_0)^\top F_{\rho,\theta_0} (\theta-\theta_0)\\
&= \frac{1}{6}H_{\rho,\tilde \theta}^{(kl)}
[(\theta-\theta_0),(\theta-\theta_0),(\theta-\theta_0)]
\end{align*}
for some $\tilde\theta$ between $\theta_0$ and $\theta$. Assume $\|T_\theta(s,a)\|_{\mathrm{op}}\le Q$ for
$T_\theta(s,a)=\nabla_\theta^3\log\pi_\theta(a|s)$, then $\|H_{\rho, \theta}^{(kl)}\|_{\mathrm{op}}\le
G^3+C_1LG+C_2Q=:M_2$,
with modest constants $C_1,C_2$, yielding
\[
|e_{\rho,\text{kl}}(\theta)|\le \frac{M_2}{6}\|\theta-\theta_0\|^3.
\]
Therefore, we get the total error in approximating the objective $J(\theta;r^*)$:
\begin{align*}
   &|J(\theta; r^*)  - \left( \mathbb{E}_{\rho, \pi_{\theta_0}}\!\left[r^*(s,a)\right] + (\theta-\theta_0)^\top A_{\rho,\theta_0}r^* - \frac{\beta}{2}(\theta-\theta_0)^\top F_{\rho,\theta_0}(\theta-\theta_0) \right)|\\
&\le
\frac{M_1}{2}\|\theta-\theta_0\|^2
+\frac{\beta M_2}{6}\|\theta-\theta_0\|^3 = O \left( \|\theta-\theta_0\|^2 + \beta \|\theta-\theta_0\|^3 \right)= O(1/\beta^2) 
\end{align*}
since $\norm{\theta-\theta_0} = O(1/\beta)$ for $\theta \in \cE$. Thus the error $\le \epsilon$ for some $\beta > \beta_{\min}$.
\end{proof}

\estaux*

\begin{proof}
First note that minimizing the objective in~\eqref{eq:auxdpo-population} is equivalent to minimizing the reverse KL divergence, i.e.,
\begin{align*}
&\min_{\theta \in \mathbb{R}^d,  \delta \in \cN(A_{\rho,\theta_0}) } -\!\sum_{s,a,a'}\!\! n_{s,a,a'} \!\left[ p_{s,a,a'}^{\BTL}(r^*) \log p_{s,a,a'}^{\BTL}(r^\beta_{\theta,\delta}) + (1 - p_{s,a,a'}^{\BTL}(r^*)\log \big(1-p_{s,a,a'}^{\BTL}(r^\beta_{\theta,\delta})\big) \right]\\
&= \min_{\theta \in \mathbb{R}^d, \delta \in \cN(A_{\rho,\theta_0})} \sum_{s,a,a'} n_{s,a,a'} \cdot d_{\mathrm{KL}}\left(p_{s,a,a'}^{\BTL}(r^*) || p_{s,a,a'}^{\BTL}(r^\beta_{\theta,\delta})\right),
\end{align*}
where $r^\beta_{\theta,\delta}(s,a)=r_\theta^\beta(s,a)+\delta(s,a)$. Since $\lbrace r^\beta_{\theta,\delta}: \theta \in \Real^d, \delta \in \cN(A_{\rho,\theta_0})\rbrace = \Real^m$, this is equivalent to solving
\[
\min_{r \in \Real^m} \sum_{s,a,a'} n_{s,a,a'} \cdot d_{\mathrm{KL}}\left(p_{s,a,a'}^{\BTL}(r^*) || p_{s,a,a'}^{\BTL}(r)\right)~.
\]
Because KL divergence is nonnegative and equals zero iff its arguments coincide, the above objective is minimized (to zero) if and only if $p_{s,a,a'}^{\BTL}(r) = p_{s,a,a'}^{\BTL}(r^*), \,\forall s, \forall a\neq a'$.  

From Proposition~\ref{prop:error}, since $r^\beta_\theta(s,a) - \overline r^\beta_\theta(s,a)\le \epsilon$  for each $\theta \in \mathcal{E}$, where  
$\overline r^\beta_\theta \in \cC(A_{\theta_0}^\top)$, it holds that the minimizer $r = r^\beta_{\theta^*,\delta^*}$ upto an order $O(\epsilon)$.
\end{proof}

\section{Additional Experimental Details}

\subsection{Ablation Study on Trainable Parameters}
\label{app:ablation-trainable}

\begin{figure}[H]
    \centering
    \includegraphics[width=\linewidth]{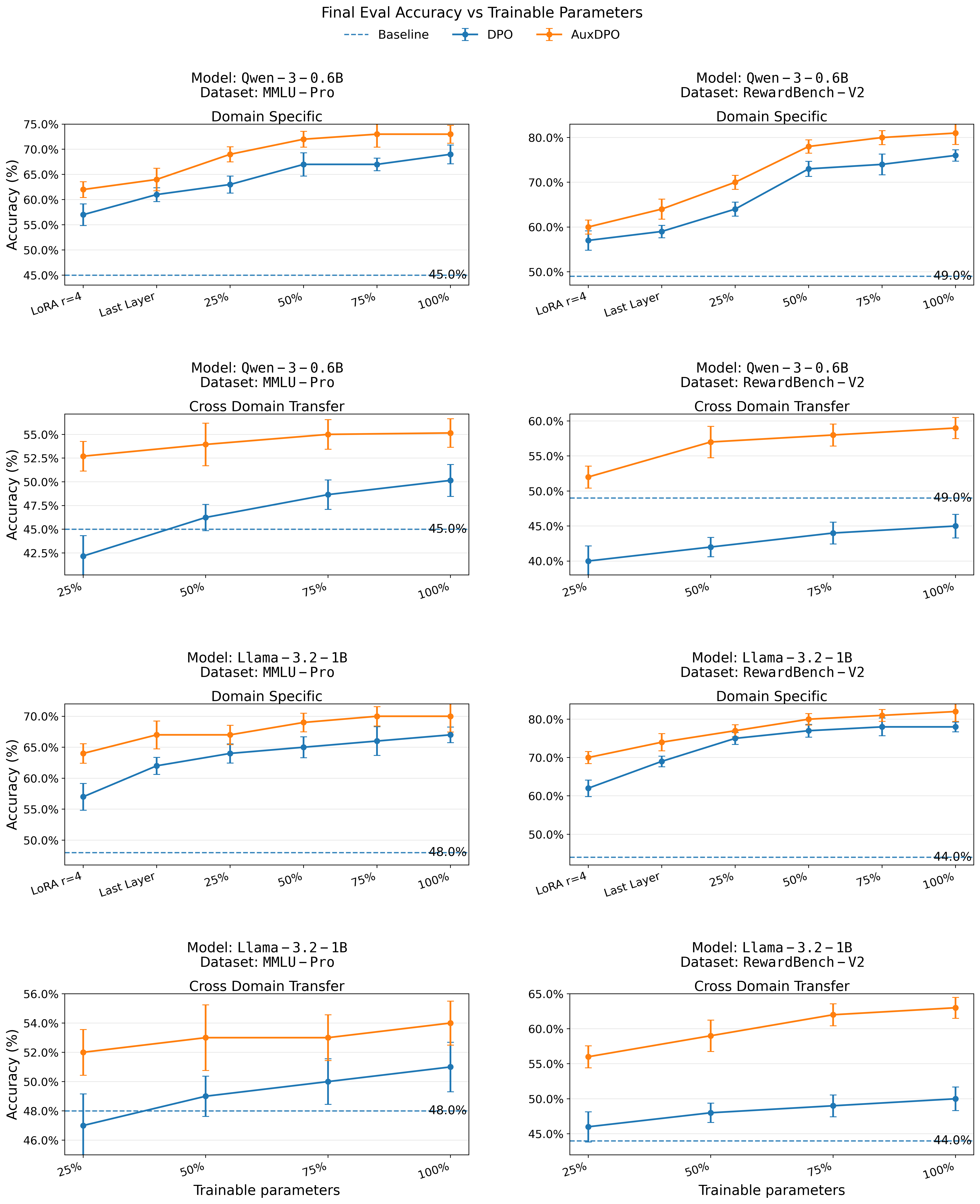}
    \caption{Final evaluation accuracy (\%) for \texttt{Qwen-3-0.6B} and \texttt{Llama-2.3-1B} on \texttt{MMLU-Pro} and \texttt{RewardBench-V2} under \textit{Domain-Specific} (ID) and \textit{Cross-Domain Transfer} (OOD) settings. Each subplot compares DPO and AuxDPO across fractions of trainable parameters (25\%, 50\%, 75\%, 100\%); for ID panels we also include \texttt{LoRA r=4} and \texttt{Last Layer} configurations. Markers show run means with error bars denoting $\pm$1 std, and dashed lines (when present) indicate per-panel baselines.}

    \label{fig:param_grid_4_by_2}
\end{figure}

In this section, we perform ablation studies on the performance of AusxDPO vs DPO by changing the number of trainable parameters. Given that AuxDPO mitigates model misspecification issues that can arise while using standard DPO, we focus mostly on having a low number of trainable parameters, where model expressibility is generally lower than entire fine-tuned models. 

\textbf{Datasets}
We conduct evaluations on two benchmark datasets: \textsc{RewardBench v2} and \textsc{MMLU-Pro}. \textsc{RewardBench v2} \citep{malik2025rewardbench} is a multi-skill reward modeling benchmark designed to bring new, challenging data for accuracy-based reward model evaluation. The dataset contains around $1.87K$ prompts covering categories as factuality, precise instruction following, focus with each prompt containing a chosen and an accepted response. \textsc{MMLU-Pro} \citep{wang2024mmlu} is a robust and challenging massive multi-task understanding dataset tailored to more rigorously benchmark large language models' capabilities. This dataset contains $12K$ complex questions across various disciplines. It's dataset contains around $12K$ questions ranging across fields such as math, law, chemistry, business, history, and psychology. For each question, there is a list of $10$ possible correct answers and the corresponding correct answer. We use \textsc{UltraFeedback} \citep{cui2024ultrafeedbackboostinglanguagemodels} as our training dataset. Specifically, we use the pre-processed and binarized version of \textsc{UltraFeedback} as presented by \citet{dong2024rlhfworkflowrewardmodeling}, which has been shown to generate higher quality reward models \citep{dong2024rlhfworkflowrewardmodeling, ArmoRM, xiong2024iterative, banerjee2024reliablealignmentuncertaintyawarerlhf}.
\textsc{RewardBench v2} is a preference dataset and provides chosen and rejected responses and is used as. \textsc{MMLU-Pro} by default is not a preference dataset. We make it into a preference dataset by filtering the correct answer as the chosen response and any incorrect response as the rejected response.

\textbf{Methodology}
We consider both in-distribution (ID) and out-of-distribution (OOD) evaluation settings. In the ID setup, each dataset is split 80/20 into training and evaluation subsets, ensuring IID comparisons. In the OOD setup, models are trained on the cleaned \textsc{UltraFeedback} dataset and evaluated on the held-out preference datasets.  We vary the fraction of trainable parameters across $25\%, 50\%, 75\%, 100\%$, by unfreezing the last $k\%$ of transformer blocks (by depth). In addition, for the in-domain (ID) panels we include two constrained-capacity baselines: \texttt{Last Layer}, where only the last transformer block is trainable and \texttt{LoRA r=4}, low-rank adapters inserted in the last transformer block on the \texttt{q\_proj}, \texttt{k\_proj}, \texttt{v\_proj}, \texttt{o\_proj} matrices. Unless otherwise noted, optimization settings, token budgets, and data splits are held fixed within each model–dataset panel so that the \emph{only} differences are (i) the algorithm (DPO, AuxDPO, IPO, DPOP) and (ii) the trainable-parameter configuration. Each marker reports the mean across 20 random seeds and error bars denote $\pm 1$ standard deviation; dashed lines indicate the per-panel base policy.

\textbf{Evaluation.}
As we decrease the number of trainable parameters, we perform finetuning by either ID or OOD method and evaluate accuracies by measuring the log probabilities of the chosen and rejected responses. Figure~\ref{fig:param_grid_4_by_2} reports final evaluation accuracy (\%) for \texttt{Qwen-3-0.6B} and \texttt{Llama-2.3-1B} across \textsc{MMLU-Pro} and \textsc{RewardBench-v2} under ID and OOD. Each subplot compares DPO vs.\ AuxDPO at \(25,50,75,100\%\) trainable parameters; ID panels additionally include \texttt{LoRA r=4} and \texttt{Last Layer}. Figure~\ref{fig:llama8_algos} reports the analogous sweep for \texttt{Llama-2.1-8B}, now comparing DPO, AuxDPO, IPO, and DPOP at the same trainable fractions (ID/OOD), with dashed baselines and \(\pm 1\) std error bars. Together, the figures isolate (a) sensitivity to optimization method, (b) sensitivity to effective capacity, and (c) ID vs.\ OOD robustness, while controlling for data and compute.

\textbf{Results.}
Within a panel, horizontal movement (left\(\rightarrow\)right) reflects increasing capacity (more parameters unfrozen); vertical separation between method curves reflects algorithmic gains at fixed capacity. The dashed line is the per-panel base policy; curves above (below) it indicate improvement (degradation). In ID panels, compare \texttt{LoRA r=4} and \texttt{Last Layer} against the fractional unfreeze settings to understand cost–performance trade-offs at very low trainable budgets. 

\textbf{Empirical trends (qualitative).}
Across models and datasets we generally observe: (i) accuracy tends to improve as the trainable fraction increases, with diminishing returns beyond \(75\%\); (ii) AuxDPO often outperforms vanilla DPO at matched capacity, particularly on \textsc{RewardBench-v2}; (iii) AuxDPO gains for OOD are more significant than than ID gains, reflecting the difficulty of cross-domain generalization for standard DPO; and (iv) At lower capacity (\texttt{LoRA r=4} and \texttt{Last Layer}) AuxDPO performs significantly better than DPO.

\begin{figure}[!htbp]
    \centering
    \includegraphics[width=\linewidth]{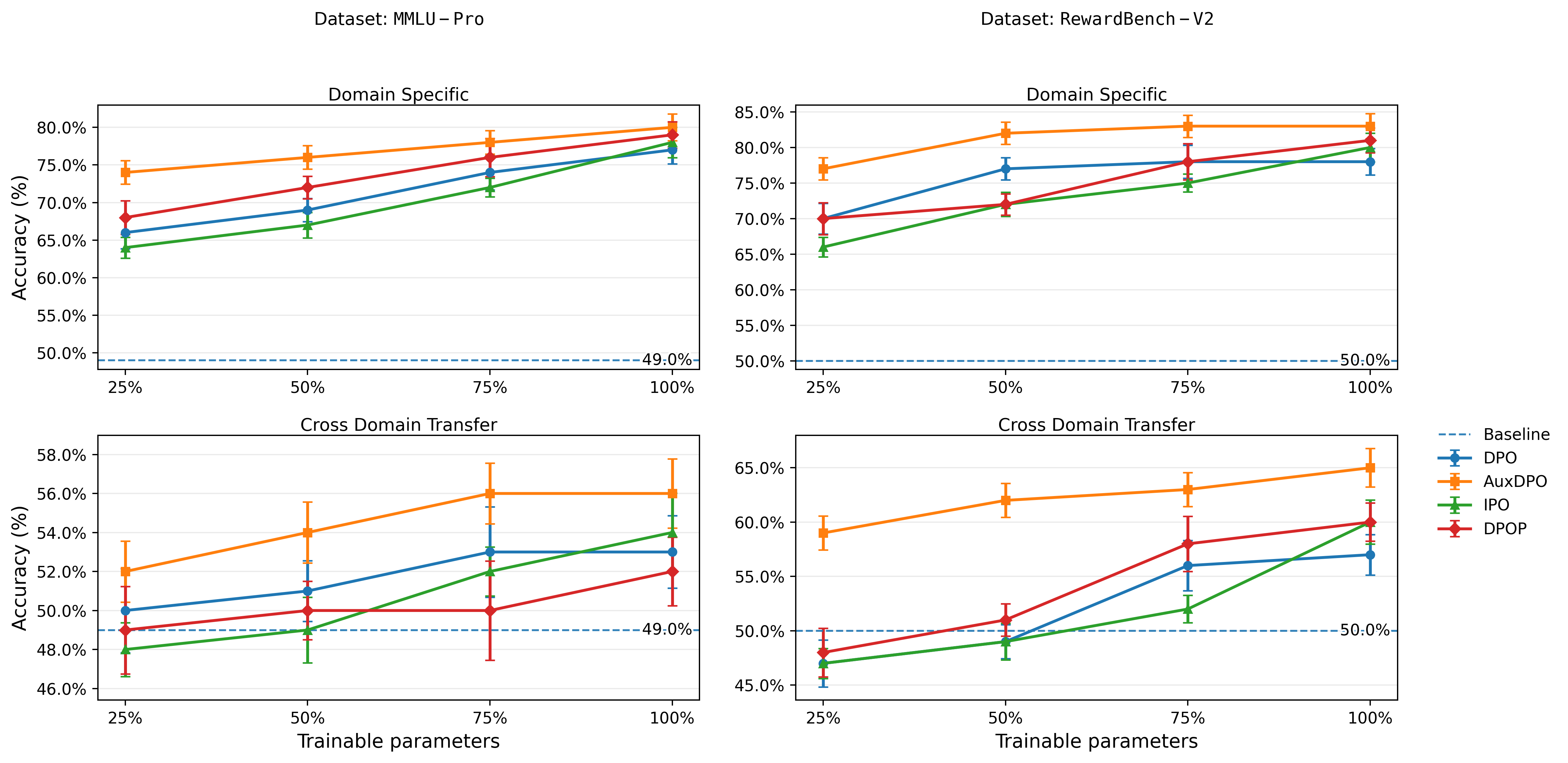}
    \caption{Final evaluation accuracy (\%) of the \texttt{Llama-2.1-8B} model on \texttt{MMLU-Pro} and \texttt{RewardBench-V2} under \textit{Domain-Specific} (ID) and \textit{Cross-Domain Transfer} (OOD) settings. Each subplot compares DPO, AuxDPO, IPO, and DPOP across fractions of trainable parameters (25\%, 50\%, 75\%, 100\%); markers show run means with error bars denoting $\pm$1 std, and a dashed line marks the per-panel baseline.}
    \label{fig:llama8_algos}
\end{figure}

\subsection{Implementation Details of AuxDPO}
\label{sec:impl-auxdpo}

\textbf{Objects and notation.}
Given a preference dataset \(\bigl(s^{(i)},a_w^{(i)},a_l^{(i)}\bigr)_{i=1}^{n}\),
AuxDPO introduces a per-example offset vector \(\delta\in\mathbb{R}^{2n}\) with
\[
\delta_{2i-1}=\delta\!\bigl(s^{(i)},a_w^{(i)}\bigr),\qquad
\delta_{2i}=\delta\!\bigl(s^{(i)},a_l^{(i)}\bigr)
\quad(i=1,\dots,n).
\]
Let \(\theta_0\) denote the reference policy and let \(d\) be the number of
\emph{trainable} parameters.
We define \(A_{\theta_0}\in\mathbb{R}^{d\times 2n}\) whose columns are
reference-model gradients on \emph{response tokens only}:
\[
A_{\theta_0}[:,2i-1]=\nabla_{\theta_0}\log \pi_{\theta_0}\!\bigl(a_w^{(i)}\mid s^{(i)}\bigr),
\qquad
A_{\theta_0}[:,2i]=\nabla_{\theta_0}\log \pi_{\theta_0}\!\bigl(a_l^{(i)}\mid s^{(i)}\bigr).
\]
The AuxDPO margin augments DPO with \(\delta\):
\[
m_i(\theta,\delta)\;=\;\beta\!\Big(
\underbrace{\log\pi_{\theta}\!\bigl(a_w^{(i)}\!\mid s^{(i)}\bigr)
-\log\pi_{\theta}\!\bigl(a_l^{(i)}\!\mid s^{(i)}\bigr)}_{\text{model}}
-\underbrace{\big[\log\pi_{\theta_0}\!\bigl(a_w^{(i)}\!\mid s^{(i)}\bigr)
-\log\pi_{\theta_0}\!\bigl(a_l^{(i)}\!\mid s^{(i)}\bigr)\big]}_{\text{reference}}
+\underbrace{\delta_{2i-1}-\delta_{2i}}_{\text{Aux term}}
\Big).
\]
The training objective is the usual DPO logistic loss
\(\sum_{i=1}^{n} \log\sigma\!\big(m_i(\theta,\delta)\big)\),
with constraints/penalties described below.

\textbf{Null-space formulation (small \(d\); e.g., LoRA/PEFT).}
When \(d\) is modest, we enforce \(A_{\theta_0}\delta=0\) exactly by optimizing in
\(\mathcal{N}(A_{\theta_0})\).
Compute an orthonormal basis
\(\Gamma=[\gamma_1,\ldots,\gamma_{2n-r}]\in\mathbb{R}^{2n\times(2n-r)}\)
for \(\mathcal{N}(A_{\theta_0})\) (e.g., thin SVD/Householder) and parameterize
\[
\delta=\Gamma c,\qquad c\in\mathbb{R}^{2n-r}.
\]
This guarantees the constraint by construction and reduces variables; we then
optimize \((\theta,c)\) jointly with the same optimizer/schedule as DPO.

\textbf{Batchwise relaxation (large \(d\)).}
For large models where global \(A_{\theta_0}\) is infeasible, we enforce the
constraint approximately per batch \(\mathcal{B}\subset\{1,\dots,n\}\) (with \(|\mathcal{B}|=B\)).
Let \(A_{\theta_0,\mathcal{B}}\) and \(\delta_{\mathcal{B}}\) denote the batch
columns/entries. We add a soft penalty and a small stabilizer:
\[
\mathcal{L}_{\text{AuxDPO}}
=\sum_{i\in\mathcal{B}} \log\sigma\!\big(m_i(\theta,\delta)\big)
\;-\;\lambda_{\text{null}}\big\|A_{\theta_0,\mathcal{B}}\delta_{\mathcal{B}}\big\|_2^2
\;+\;\lambda_{\text{amp}}\big\|\delta_{\mathcal{B}}\big\|_2^2,
\]
and maintain \(\|\delta_{\mathcal{B}}\|_2>0\) (e.g., via normalization or a small floor)
to avoid the trivial solution. This aligns \(\delta\) with \(\mathcal{N}(A_{\theta_0})\)
\emph{locally} while avoiding full-matrix costs.

\textbf{Integration details.}
We implement AuxDPO as a lightweight extension of \texttt{TRL}'s \texttt{DPOTrainer}.
A custom collator attaches the dataset index \(i\) so the trainer can address
\(\delta_{2i-1}\) and \(\delta_{2i}\).
Reference log-probabilities are computed once per batch with prompt tokens
masked (gradients over response tokens only) to build \(A_{\theta_0,\mathcal{B}}\)
on the fly in the large-\(d\) regime; in the small-\(d\) regime,
\(A_{\theta_0}\) can be precomputed at initialization.
We store and update either the per-example vector \(\delta\) directly or the
lower-dimensional coefficients \(c\) (null-space case).
All other training hyperparameters (token budgets, splits, early stopping) match
standard DPO to ensure comparability.

\textbf{What is stored.}
\emph{(i)} The per-example offsets \(\delta\in\mathbb{R}^{2n}\) (or coefficients
\(c\) in the null-space parameterization).
\emph{(ii)} Either the global \(A_{\theta_0}\) (small \(d\)) or batchwise slices
\(A_{\theta_0,\mathcal{B}}\) (large \(d\)).
Both are kept in the same precision as the model logits to minimize overhead.

\begin{longtable}{@{}p{0.25\linewidth} p{0.72\linewidth}@{}}
\caption{Implementation details for DPO / AuxDPO with an 8B policy.}
\label{tab:impl_details_appendix}\\
\toprule
\textbf{Aspect} & \textbf{Details} \\
\midrule
\endfirsthead
\toprule
\textbf{Aspect} & \textbf{Details} \\
\midrule
\endhead
\bottomrule
\endfoot

\textbf{Trainer \& Args} &
\begin{minipage}[t]{\linewidth}
\begin{itemize}[leftmargin=*,topsep=0pt,partopsep=0pt,itemsep=1pt,parsep=0pt]
  \item \texttt{AuxDPOTrainer} (TRL); baseline: \texttt{DPOTrainer}.
  \item Key \texttt{DPOConfig}: \texttt{per\_device\_train\_batch\_size=4}, \texttt{num\_train\_epochs=1}, \texttt{eval\_strategy="steps"}, \texttt{eval\_steps=500}, \texttt{logging\_strategy="steps"}, \texttt{logging\_steps=100}, \texttt{remove\_unused\_columns=False}.
\end{itemize}
\end{minipage} \\
\addlinespace[8pt]
\textbf{Precision / Optimizer} &
\begin{minipage}[t]{\linewidth}
\begin{itemize}[leftmargin=*,topsep=0pt,partopsep=0pt,itemsep=1pt,parsep=0pt]
  \item Trainable params \(\approx 8.03\)B; dtype \texttt{torch.float32} for weights.
  \item AdamW in FP32 (moments \(m,v\) FP32). Runtime \texttt{bf16=True} for compute (autocast).
\end{itemize}
\end{minipage} \\
\addlinespace[8pt]
\textbf{AuxDPO knobs} &
\begin{minipage}[t]{\linewidth}
\begin{itemize}[leftmargin=*,topsep=0pt,partopsep=0pt,itemsep=1pt,parsep=0pt]
  \item \(\lambda_{\text{null}}=1.0\) (penalty on \(\lVert A^\top \delta\rVert^2\)).
  \item \(\lambda_{\text{amp}}=0.01\) (small negative L2 on batch \(\delta\)).
  \item \texttt{delta\_cap} \(=1.0\) (tanh bound on \(|\delta|\)).
  \item \texttt{aux\_lr} \(=5\times 10^{-3}\).
\end{itemize}
\end{minipage} \\
\addlinespace[8pt]
\textbf{DPOP knobs} &
\begin{minipage}[t]{\linewidth}
\begin{itemize}[leftmargin=*,topsep=0pt,partopsep=0pt,itemsep=1pt,parsep=0pt]
  \item \(\lambda_{\text{pos}}=1.0\) (penalty encouraging higher log-likelihood on chosen responses).
\end{itemize}
\end{minipage} \\
\addlinespace[8pt]
\textbf{Reference policy} &
\begin{minipage}[t]{\linewidth}
\begin{itemize}[leftmargin=*,topsep=0pt,partopsep=0pt,itemsep=1pt,parsep=0pt]
  \item \texttt{ref\_model=None} (TRL snapshots and freezes a copy).
  \item Used forward-only for the reference model passes.
\end{itemize}
\end{minipage} \\
\addlinespace[8pt]
\textbf{Parameter / state sizes (theoretical)} &
\begin{minipage}[t]{\linewidth}
\begin{itemize}[leftmargin=*,topsep=0pt,partopsep=0pt,itemsep=1pt,parsep=0pt]
  \item Weights \(\approx 29.9\,\)GiB (8.03B\(\times\)4\,B).
  \item Gradients \(\approx 29.9\,\)GiB.
  \item Adam moments \((m,v)\approx 59.8\,\)GiB.
  \item Total model states \(\approx 119.6\,\)GiB per full replica (excl.\ activations).
\end{itemize}
\end{minipage} \\
\addlinespace[8pt]
\textbf{Hardware / GPUs} &
\begin{minipage}[t]{\linewidth}
\begin{itemize}[leftmargin=*,topsep=0pt,partopsep=0pt,itemsep=1pt,parsep=0pt]
  \item \(8\times\) GPUs, \(\approx 192\,\)GiB VRAM each (ROCm total \(206{,}141{,}652{,}992\) B).
  \item Single Python process with contexts on all 8 devices.
\end{itemize}
\end{minipage} \\
\addlinespace[8pt]
\textbf{VRAM (AuxDPO)} &
\begin{minipage}[t]{\linewidth}
\begin{itemize}[leftmargin=*,topsep=0pt,partopsep=0pt,itemsep=1pt,parsep=0pt]
  \item \texttt{allocated}: \(17.4\,\)GiB on GPU1--6; \(11.9\,\)GiB on GPU0,7.
  \item \texttt{reserved}: \(81.2\,\)GiB on GPU1--6; \(54\)--\(55\,\)GiB on GPU0,7.
  \item Device-used (ROCm SMI): \(82.4\,\)GiB on GPU1--6; \(55\)--\(56\,\)GiB on GPU0,7.
\end{itemize}
\end{minipage} \\
\addlinespace[8pt]
\textbf{VRAM (DPO)} &
\begin{minipage}[t]{\linewidth}
\begin{itemize}[leftmargin=*,topsep=0pt,partopsep=0pt,itemsep=1pt,parsep=0pt]
  \item \texttt{allocated}: \(16.25\,\)GiB on GPU1--6; \(11.09\,\)GiB on GPU0,7.
  \item \texttt{reserved}: \(25.94\,\)GiB on GPU1--6; \(20.24\,\)GiB on GPU0,7.
  \item Device-used (ROCm SMI): \(29.4\,\)GiB on GPU1--6; \(22\)--\(23\,\)GiB on GPU0,7.
\end{itemize}
\end{minipage} \\
\addlinespace[8pt]
\textbf{Memory instrumentation} &
\begin{minipage}[t]{\linewidth}
\begin{itemize}[leftmargin=*,topsep=0pt,partopsep=0pt,itemsep=1pt,parsep=0pt]
  \item Step callback prints \texttt{allocated}, \texttt{reserved}, per-step peaks via
        \texttt{torch.cuda.\{memory\_allocated, memory\_reserved, max\_memory\_reserved\}}.
  \item Device totals via \texttt{rocm-smi --showmeminfo vram}.
\end{itemize}
\end{minipage} \\

\end{longtable}

\subsection{Dataset Description}
\label{sec:datasets}

\textbf{MMLU-Pro.}
\textsc{MMLU-Pro}~\citep{wang2024mmlu} is a strengthened variant of the Massive Multitask Language Understanding benchmark, designed to stress-test reasoning in large language models. It contains approximately \(12{,}000\) multiple-choice questions across 14 subjects (e.g., mathematics, law, chemistry, business, history, psychology). Each question offers 10 candidate answers with a single correct label—expanding the option set from four to ten—to curb guessing and sharpen separation among models. Compared with the original MMLU, prior work reports substantially lower accuracies and reduced sensitivity to prompt style, making \textsc{MMLU-Pro} a robust, reasoning-centric evaluation suite.

\textbf{RewardBench v2.}
\textsc{RewardBench~v2}~\citep{malik2025rewardbench} is a second-generation, multi-skill benchmark for accuracy-based evaluation of reward models on unseen human data. It comprises \(\sim\!1{,}870\) prompts spanning six subsets—\emph{Factuality}, \emph{Precise Instruction Following}, \emph{Math}, \emph{Safety}, \emph{Focus}, and \emph{Ties}. Each prompt includes a preferred (“chosen”) response and multiple rejected responses. Accuracy is measured by whether the reward model assigns a higher score to the chosen response than to all rejected alternatives. Compared to v1, the v2 release emphasizes harder, out-of-distribution prompts and reports per-subset counts to facilitate reproducible evaluation.\footnote{See the dataset card and paper for construction details, category counts, and scoring.}

\textbf{UltraFeedback.}
\textsc{UltraFeedback}~\citep{cui2024ultrafeedbackboostinglanguagemodels} is a large human-preference corpus widely used for preference optimization. We adopt the standardized pairwise format (chosen vs.\ rejected); the public \emph{preference-standard} split contains approximately \(340{,}000\) rows, supporting stable optimization and serving as our training source for out-of-distribution experiments. Following common practice for DPO-style training, we use the preprocessed, binarized release of \textsc{UltraFeedback} curated by \citet{dong2024rlhfworkflowrewardmodeling}, which has been shown to yield higher-quality reward models~\citep{dong2024rlhfworkflowrewardmodeling,ArmoRM,xiong2024iterative,banerjee2024reliablealignmentuncertaintyawarerlhf}.

\subsection{Synthetic Experiments}

We demonstrate the failure of DPO using the example of Proposition~\ref{prop:toy}. Recall that there are 3 responses with true rewards \( r^* = [1,2,0] \) and
preference ordering $a_2 \succ a_1 \succ a_3$. We take the
policy $\pi_\theta \propto [e^\theta, e^{-\theta}, 1]$. The base policy with $\theta_0 = 0$ has the average reward $\pi_{\theta_0}^\top r^*$.

 \begin{minipage}{0.45\textwidth}
\centering
\begin{tabular}{l|c|c|c}
\hline
\textbf{Method} &  $\theta$ & $\pi_\theta$ & $ \pi_{\theta}^\top r^*$ \\
\hline
DPO       & $0.40$  & $[0.47,0.21, 0.32]$ & 0.895\\
IPO       & $0.10$  & $[0.37, 0.30, 0.33]$ & 0.969\\
DPOP      & $0.10$  & $[0.37, 0.30, 0.33]$ & 0.969\\
AuxDPO    & $-0.50$ & $[0.19,0.51, 0.30]$ & 1.199\\
\hline
\end{tabular}
\captionof{table}{\footnotesize{Comparison of trained policies under an imbalanced pairwise-preference regime \((n_{12}=5,\ n_{23}=5, \ n_{31}=50)\).
Columns report the learned scalar parameter \(\theta\), the policy \(\pi_\theta\), and the achieved objective \( \pi_\theta^\top r^*\) (higher is better).}}
\label{tab:performance-summary}
\begin{tabular}{l|c|c|c}
\hline
\textbf{Method}  &  $\theta$ & $\pi_\theta$ & $ \pi_{\theta}^\top r^*$ \\
\hline
DPO       & $-0.43$ & $[0.21,\ 0.48,\ 0.31]$ & 1.17 \\
IPO       & $-0.10$ & $[0.30,\ 0.37,\ 0.33]$ & 1.03 \\
AuxDPO     & $-0.50$ & $[0.19,\ 0.51,\ 0.30]$ & 1.20 \\
\hline
\end{tabular}
\captionof{table}{\footnotesize{Comparison of trained policies under a balanced pairwise-preference regime \((n_{12}=10,\ n_{13}=10,\ n_{23}=10)\).
Reported are the learned \(\theta\), the resulting policy \(\pi_\theta\), and the objective \( \pi_\theta^\top r^*\) (higher is better).}}
\label{tab:performance-summary-2}
\end{minipage}%
\hfill
\begin{minipage}{0.45\textwidth}
 First, we take pairwise preferences counts $n_{12}=5$, $n_{23}=5$ and $n_{31}=50$. We tabulate the post-optimized policy and its average reward in Table~\ref{tab:performance-summary}. We compare AuxDPO with DPO, IPO, and DPOP. We see that policies output by DPO, IPO, and DPOP change the preference order to $a_1 \succ a_3 \succ a_2$, while AuxDPO is able to maintain 
the correct ordering. Moreover, the average reward of the AuxDPO policy increases compared to the base policy, whereas for others, the reward decreases, showing their failures. 
Next, in Table~\ref{tab:performance-summary-2}, we demonstrate the sensitivity of DPO w.r.t. pairwise preference counts. For preference counts $n_{12}=10, n_{23}=10, n_{31}=10,$, we see that DPO (as well as IPO) does not suffer from the failure modes and is able to increase the average reward compared to the base policy. Note here that AuxDPO does better than both DPO and IPO.
\end{minipage}

The results in both Tables together show that DPO is vulnerable to failure modes, including preference reversal, reward reduction, and is sensitive to the relative frequency of pairwise preference counts, while AuXDPO is able to overcome all these.

\end{document}